\documentclass{article} 
\usepackage{iclr2026_conference,times}


\PassOptionsToPackage{round}{natbib}
\usepackage{natbib}

\usepackage[utf8]{inputenc} 
\usepackage[T1]{fontenc}    
\usepackage{hyperref}       
\usepackage{url}            
\usepackage{booktabs}       
\usepackage{amsfonts}       
\usepackage{nicefrac}       
\usepackage{microtype}      
\usepackage{xcolor}         
\usepackage{listings}
\usepackage{amsthm, amsmath}
\usepackage{xfrac}
\usepackage{nicefrac}
\usepackage{graphicx}
\usepackage[thicklines]{cancel}
\usepackage{subfig}
\usepackage{tcolorbox}

\usepackage{algorithm}
\usepackage{algpseudocode}

\usepackage{thm-restate}
\usepackage{wrapfig}

\usepackage{hyperref}
\usepackage[nameinlink,capitalise]{cleveref}

\hypersetup{colorlinks,linkcolor=red,citecolor=purple,urlcolor=blue}        

\newcommand{\altfrac}[2]{\ifmmode\def\tmp{$}\else\def\tmp{}\fi\mbox{%
    {\raisebox{.24\ht\strutbox}{\tmp#1\tmp}}%
    \kern-2.2pt\scalebox{1.6}[1.5]{/}\kern-1.8pt%
    {\tmp#2\tmp}%
    }}

\definecolor{codegreen}{rgb}{0,0.6,0}
\definecolor{codegray}{rgb}{0.5,0.5,0.5}
\definecolor{codepurple}{rgb}{0.58,0,0.82}
\definecolor{backcolour}{rgb}{0.95,0.95,0.92}

\lstdefinestyle{mystyle}{
    backgroundcolor=\color{backcolour},   
    commentstyle=\color{codegreen},
    keywordstyle=\color{magenta},
    numberstyle=\tiny\color{codegray},
    stringstyle=\color{codepurple},
    basicstyle=\ttfamily\footnotesize,
    breakatwhitespace=false,
    xleftmargin=1em,
    xrightmargin=1em,
    breaklines=true,                 
    captionpos=b,                    
    keepspaces=true,                 
    numbers=left,                    
    numbersep=5pt,                  
    showspaces=false,                
    showstringspaces=false,
    showtabs=false,                  
    tabsize=2
}

\lstset{style=mystyle}

\usepackage{mathtools}


\usepackage{amsmath,amsfonts,bm}









\def\eqref#1{equation~\ref{#1}}









\def\1{\bm{1}}



\def\rW{{\textnormal{W}}}






\def\vmu{{\bm{\mu}}}
\def\vtheta{{\bm{\theta}}}

\def\veta{{\bm{\eta}}}
\def\vgamma{{\bm{\gamma}}}

\def\vg{{\bm{g}}}

\def\vx{{\bm{x}}}

\def\vz{{\bm{z}}}


\def\mA{{\bm{A}}}

\def\mI{{\bm{I}}}

\DeclareMathAlphabet{\mathsfit}{\encodingdefault}{\sfdefault}{m}{sl}
\SetMathAlphabet{\mathsfit}{bold}{\encodingdefault}{\sfdefault}{bx}{n}











\newcommand{\E}{\mathbb{E}}

\newcommand{\R}{\mathbb{R}}



\newtheorem{lemma}{Lemma}[section]
\newtheorem{proposition}{Proposition}[section]
\newtheorem{definition}{Definition}[section]
\newtheorem{corollary}{Corollary}[section]
\newtheorem*{proposition*}{Proposition}
\newtheorem*{corollary*}{Corollary}
\newtheorem*{definition*}{Definition}

\def\kl{\operatorname{KL}}
\def\kls{\operatorname{KL^S}}
\def\vtx{{\vx_t}}
\def\div{{\operatorname{div}}}

\usepackage{silence}
\WarningFilter{latex}{Command \showhyphens has changed}

\title{The Spacetime of Diffusion Models:\\An Information Geometry Perspective}


\author{Rafał Karczewski\textsuperscript{1}, Markus Heinonen\textsuperscript{1}, Alison Pouplin\textsuperscript{1}, Søren Hauberg\textsuperscript{2}, Vikas Garg\textsuperscript{1,3} \\
\textsuperscript{1} Aalto University, \textsuperscript{2} Technical University of Denmark, \textsuperscript{3} YaiYai Ltd \\
\texttt{\{rafal.karczewski, markus.o.heinonen\}@aalto.fi}
\\\texttt{alison.pouplin@gmail.com, sohau@dtu.dk, vgarg@csail.mit.edu}
}

%

\iclrfinalcopy 
\begin{document}

\maketitle

\begin{abstract}
We present a novel geometric perspective on the latent space of diffusion models. We first show that the standard pullback approach, utilizing the deterministic probability flow ODE decoder, is fundamentally flawed. It provably forces geodesics to decode as straight segments in data space, effectively ignoring any intrinsic data geometry beyond the ambient Euclidean space. Complementing this view, diffusion also admits a stochastic decoder via the reverse SDE, which enables an information geometric treatment with the Fisher-Rao metric. However, a choice of $\vx_T$ as the latent representation collapses this metric due to memorylessness. We address this by introducing a latent spacetime $\vz=(\vx_t,t)$ that indexes the family of denoising distributions $p(\vx_0 | \vx_t)$ across all noise scales, yielding a nontrivial geometric structure. We prove these distributions form an exponential family and derive simulation-free estimators for curve lengths, enabling efficient geodesic computation. The resulting structure induces a principled Diffusion Edit Distance, where geodesics trace minimal sequences of noise and denoise edits between data. We also demonstrate benefits for transition path sampling in molecular systems, including constrained variants such as low-variance transitions and region avoidance. Code is available at \url{https://github.com/rafalkarczewski/spacetime-geometry}.
\looseness=-1
\end{abstract}

\section{Introduction}

Diffusion models have emerged as a powerful paradigm for generative modeling, demonstrating remarkable success in learning to model and sample data \citep{yang2023diffusion}. 
While the underlying mathematical frameworks of training and sampling are well-established \citep{sohl2015deep,kingma2021variational,song2020score,lu2022maximum,holderrieth2024generator}, analysing how information evolves through the noisy intermediate states $\vx_t$ for $t\in [0,T]$ remains an open question. Our work addresses this by defining and analyzing the geometric structure of diffusion models, which provides a principled framework for understanding their inner workings.

\begin{wrapfigure}[14]{r}{0.47\textwidth}
  \centering
    \vspace{-7pt}
    \includegraphics[width=0.47\textwidth]{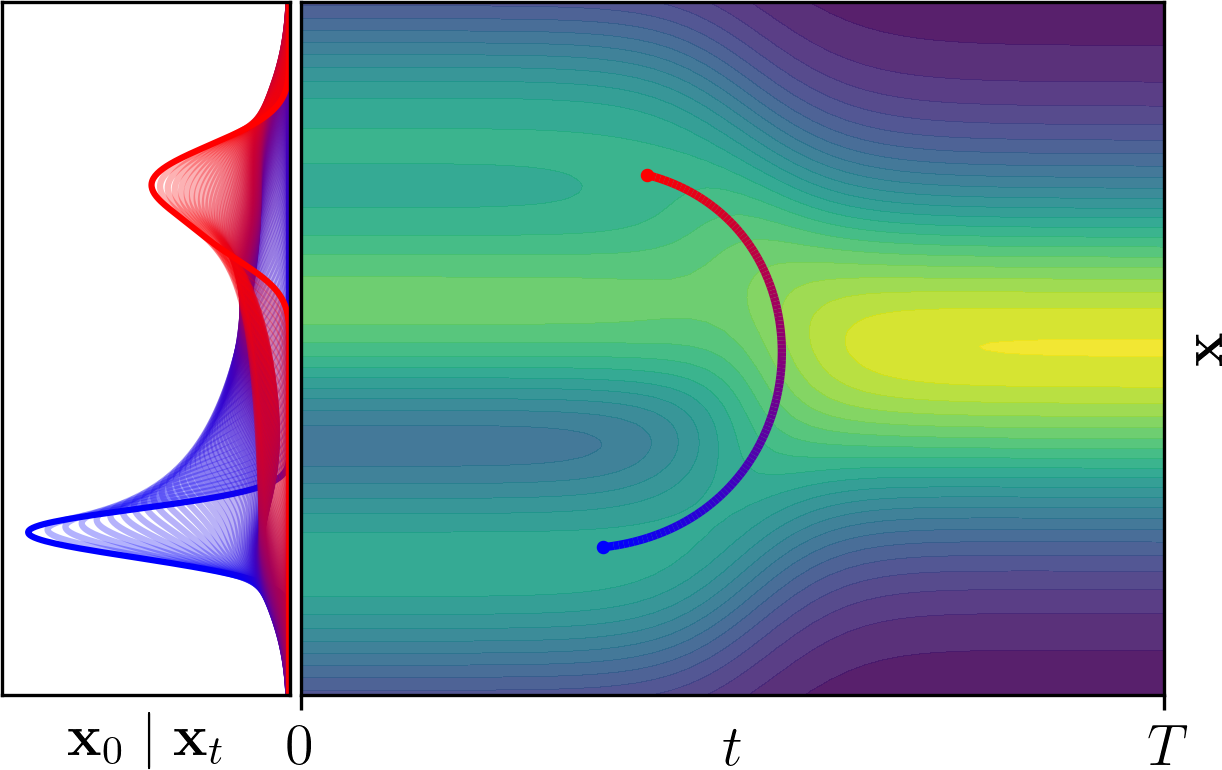}
    \caption{\textbf{A geodesic in spacetime} is the shortest path between denoising distributions.}
    \label{fig:fig1}
\end{wrapfigure}
In generative models, a common way to study the intrinsic geometry of the data is to pull back the ambient (Euclidean) metric onto the latent space \citep{arvanitidis2018latent,pmlr-v151-arvanitidis22b}. Equipped with this pullback metric, shortest paths (i.e., \emph{geodesics}) in the latent space decode to realistic transitions along data that lie on a lower-dimensional submanifold.

In a diffusion model, a natural choice for the decoder is the reverse ODE $\vx_0(\vx_T)$, which allows us to derive the pullback geometry of the latents $\vx_T$. Interestingly, we prove that this leads to latent shortest paths always decoding to linear interpolations in data space, which have little practical utility. 




We then turn our attention to the denoising posterior distribution $p(\vx_0|\vx_t)$ given by the reverse SDE. We propose an alternative \emph{Fisher-Rao geometry}, which measures how the denoising distribution $p(\vx_0|\vx_t)$ changes when manipulating the latent $\vx_t$. We introduce the Fisher-Rao metric $\mathbf{G}(\vx_t,t)$ that varies with both state and time over the \emph{latent spacetime} $(\vx_t,t)$ (\cref{fig:fig1}).




Estimating geodesics in information geometry is usually tractable only for analytic families. Although denoising distributions in diffusion are complex and non-Gaussian, we show that they form an exponential family. This simplifies the geometry and yields a practical method for computing geodesics between any two samples through the spacetime. In the Fisher–Rao setting, curve lengths can be evaluated without running the reverse SDE, which significantly reduces the computational cost.\looseness=-1

We demonstrate the utility of the Fisher-Rao geometry in diffusion models in two ways. First, it induces a principled \emph{Diffusion Edit Distance} on data that admits a clear interpretation: the geodesic between $\vx^a$ and $\vx^b$ traces the minimal sequence of edits, adding just enough noise to forget information specific to $\vx^a$ and then denoising to introduce information specific to $\vx^b$. The resulting length quantifies the total edit cost. Second, spacetime geodesics allow generating transition paths in molecular systems, where we obtain results competitive with specialized state-of-the-art methods and can incorporate constraints such as avoidance of designated regions in data space.

\section{Background on diffusion models}


We assume a data distribution $q$ defined on $\R^D$, and the forward process 
\begin{align}\label{eq:forward-kernel}
    p(\vtx | \vx_0) = \mathcal{N}(\vtx | \alpha_t \vx_0, \sigma_t^2\mI),
\end{align}
which gradually transforms $q$ into pure noise $p_T \approx \mathcal{N}(\mathbf{0}, \sigma_T^2\mI)$ at time $T$, where $\alpha_t, \sigma_t$ define the forward drift $f_t$ and diffusion $g_t$.
There exists a \emph{denoising} SDE reverse process \citep{anderson1982reverse}
\begin{equation}\label{eq:rev-sde}
    \text{Reverse SDE:} \quad d\vx = \Big(f_t\vx - \phantom{\frac{1}{2}}g^2_t \nabla \log p_t(\vx)\Big) dt + g_td\overline{\rW}_t, \quad \vx_T \sim p_T,
\end{equation}
where $p_t$ is the marginal distribution of the forward process (\autoref{eq:forward-kernel}) at time $t$, and $\overline{\rW}$ is a reverse Wiener process. Somewhat unexpectedly, there exists a deterministic Probability Flow ODE (PF-ODE) with matching marginals \citep{song2020score}:
\begin{equation}\label{eq:pf-ode}
    \hspace{7mm}\text{PF ODE:} \quad d\vx = \Big(f_t\vx - \frac{1}{2}g^2_t \nabla \log p_t(\vx)\Big) dt, \phantom{+ g_td\overline{\rW}_t} \quad \vx_T \sim p_T.
\end{equation}
Assuming we can approximate the score $\nabla \log p_t$ \citep{karras2024analyzing}, we denote by $\vx_T \mapsto \vx_0(\vx_T)$ the \emph{deterministic} denoiser of solving the PF-ODE from noise $\vx_T$, while we denote by $p(\vx_0|\vx_t)$ the denoising distributions induced by \emph{stochastic} sampling of the reverse SDE \citep{karras2022elucidating}.


\section{Riemannian geometry of diffusion models}\label{sec:riemannian-primer}

%

Riemannian geometry equips a latent space $\mathcal{Z}$ with a smoothly varying \emph{metric tensor} $\mathbf{G}(\vz)$ for $\vz \in \mathcal{Z}$. This metric defines inner products and induces the notions of distance and curve length \citep{do1992riemannian}. Several works have developed diffusion models \emph{on top} of Riemannian manifolds, such as spheres, tori and hyperboloids \citep{de2022riemannian,huang2022riemannian,thornton2022riemannian}. In this paper, we instead study what kind of Riemannian geometries are \emph{implicitly induced} by the denoiser within a real vector space $\R^D$ (e.g., images). 

In Euclidean geometry, the space is flat, with distances given by the length of straight lines connecting points. In Riemannian spaces, the shortest path between two points is no longer straight, but a curved \emph{geodesic}. 
A smooth curve $\vgamma:[0,1]\to\mathcal{Z}$ between fixed endpoints $\vgamma_0,\vgamma_1$ is a geodesic if it minimizes the length \looseness=-1
\begin{align}\label{eq:geodesic}
    \ell(\vgamma) = \int_0^1 \|\dot{\vgamma}_s\|_\mathbf{G} ds = \int_0^1 \sqrt{ \dot\vgamma_s^T \mathbf{G}(\vgamma_s) \dot\vgamma_s} ds,
\end{align}
or, equivalently, the energy $\mathcal{E}(\vgamma) = \tfrac12 \int_0^1 \|\dot{\vgamma}_s\|_\mathbf{G}^2 ds$.

We introduce two interpretations of Riemannian geometry $\mathbf{G}$ for diffusion models,  depending on whether the decoder is \emph{deterministic} or \emph{stochastic}. In both cases, we first assume the latent space is the noise space $\vx_T$, and later relax this to cover the entire noisy sample space $\vx_t$. 

\paragraph{Deterministic sampler: pullback geometry.}
Let $\vx_T \mapsto \vx_0(\vx_T)$ be a deterministic map given by the PF-ODE (\autoref{eq:pf-ode}) mapping noise to data. We propose the pullback metric \citep{pmlr-v151-arvanitidis22b,park2023understanding}
\begin{align}
    \mathbf{G}_\mathrm{PB}(\vx_T) = \left(\frac{\partial \vx_0}{\partial \vx_T}\right)^\top \left(\frac{\partial \vx_0}{\partial \vx_T}\right) \in \R^{D \times D}, \qquad \vx_0 := \vx_0(\vx_T) \in \R^D
\end{align}
which measures how an \emph{infinitesimal noise step} $d\vx_T$ changes the decoded sample:
\begin{equation}\label{eq:pullback-measure}
    \big\| \vx_0(\vx_T + d\vx_T) - \vx_0(\vx_T) \big\|^2
     = d\vx_T^{\top} \mathbf{G}_\mathrm{PB}(\vx_T) d\vx_T + o(\|d\vx_T\|^2).
\end{equation}
%

\paragraph{Stochastic sampler: information geometry.}
Alternatively, consider a \emph{stochastic} decoder that, for each latent $\vx_T$ defines a denoising distribution $p(\vx_0|\vx_T)$ by solving the Reverse SDE (\autoref{eq:rev-sde}). We propose the information-geometric viewpoint via the Fisher-Rao metric \citep{amari2016information}
\begin{align}\label{eq:fr-metrics}
    \mathbf{G}_\mathrm{IG}(\vx_T) = \mathbb{E}_{\vx_0 \sim p(\vx_0 | \vx_T)} \Big[\nabla_{\vx_T} \log p(\vx_0 | \vx_T) \: \nabla_{\vx_T} \log p(\vx_0 | \vx_T)^{\top} \Big] \in \R^{D \times D},
\end{align}
which measures how an \emph{infinitesimal noise step} $d\vx_T$ changes the \emph{entire} denoising distribution:
\begin{equation}\label{eq:ig-measure}
    \mathrm{KL}\Big[ p(\vx_0 \mid \vx_T) \, \big\| \, p(\vx_0 \mid \vx_T + d\vx_T) \Big]
    = \frac{1}{2} d\vx_T^\top \mathbf{G}_\mathrm{IG}(\vx_T) d\vx_T + o(\|d\vx_T\|^2).
\end{equation}
For a helpful tutorial on information geometry, we refer to \citet{mishra2023information}.

%


\section{Pullback geometry collapses in diffusion models}

Both pullback and information geometries are, in principle, applicable. We will first show the pullback geometry has fundamental theoretical limitations in diffusion models, rendering it practically useless.


%
%

\begin{wrapfigure}[15]{r}{0.5\textwidth}
  \centering   \includegraphics[width=0.5\textwidth]{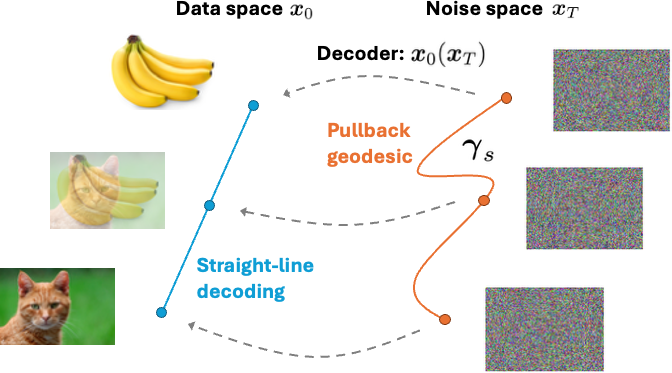}
  \caption{\textbf{The pullback geodesics curve in noise space, but decode to straight lines in data space.}}
  \label{fig:bananacat}
\end{wrapfigure}
Assume we estimate a geodesic $\vgamma$ in the noise space $\vx_T$ such that its endpoints decode to $\vx_0(\vgamma_0) = \vx^a$ and $\vx_0(\vgamma_1) = \vx^b$. 
The pullback energy $\mathcal{E}(\vgamma)$ (\autoref{eq:geodesic}) can be shown to only depend on the decoded curve $\vx_0(\vgamma_s)$ in data space (See \autoref{app:diff-pullback}):
\begin{align}
    \mathcal{E}_\mathrm{PB}(\vgamma) = \frac{1}{2} \int_0^1 \left\|\frac{d}{ds}\vx_0(\vgamma_s)\right\|^2 ds.
\end{align}

The unique minimizer is the constant-speed straight line $\vx_s = (1-s) \vx^a + s \vx^b$ in data space. 
Since the ODE is bijective, this line has a unique latent preimage $\vgamma^\star_s = \vx_0^{-1}(\vx_s) = \vx_T(\vx_s)$, which is thus a pullback geodesic, and the energy reduces to Euclidean distance in data space:
\begin{align}
    \mathcal{E}_\mathrm{PB}(\vgamma) := \frac{1}{2} \big\| \vx^a - \vx^b\big\|^2.
\end{align}

Hence, \emph{all} pullback geodesics decode to straight segments, ignoring the curvature of the data manifold and undermining downstream applications (See \cref{fig:bananacat}). The same pathology applies for denoised geodesics in the intermediate space $\vx_t$ as well. The core reason for this is that, in diffusion models, the latent and data spaces have the same dimension. The decoder operates directly in the ambient space and, without further dimensional constraints, it cannot capture the intrinsic structure of the data, even if the data lie on a lower-dimensional submanifold. As a result, the standard pullback metric provides no meaningful geometric information. A formal proof and discussion are in \cref{app:diff-pullback}.


\section{Information geometry with denoising decoders}

Under the stochastic view, the decoder is the denoising distribution $p(\vx_0 | \vx_T)$ obtained by reversing the diffusion process (\autoref{eq:rev-sde}). This yields a family of distributions on the data space parametrized with noise vectors $\vx_T$. The information geometry assigns the Fisher-Rao metric to the latent domain, and geodesic energies/lengths are computed as in \cref{sec:riemannian-primer}.

\paragraph{The latent spacetime.}
Diffusion models are “memoryless” \citep{domingo-enrich2025adjoint}:
\begin{align}
    p(\vx_T \mid \vx_0) \approx p_T(\vx_T) 
    \;\;\Rightarrow\;\; 
    p(\vx_0 \mid \vx_T) \approx q(\vx_0).
\end{align}
Hence $p(\vx_0\mid \vx_T)$ is (approximately) independent of $\vx_T$, implying $\nabla_{\vx_T}\log p(\vx_0\mid \vx_T)\approx 0$ and a collapse of the Fisher–Rao metric, $\mathbf{G}_{\mathrm{IG}}\approx \bm{0}$ (\autoref{eq:fr-metrics}). Consequently, if we identify the latent space with $\vz=\vx_T$, all $\vx_T$ become metrically indistinguishable. This could be avoided by choosing $\vz=\vtx$ for some $t < T$; however, instead of choosing an arbitrary noise level $t$, we propose to model all noise levels simultaneously by considering points in the $(D+1)$-dimensional latent \emph{spacetime}
\begin{align}
    \vz = (\vx_t,t) \in \R^D \times (0,T],
\end{align}
which define the family of all denoising distributions  $\big\{p(\vx_0|\vx_t)\big\}$ across all noise levels (\cref{fig:fig1}).

\paragraph{Why include time?} The resulting Fisher-Rao metric $\mathbf{G}_{\mathrm{IG}}(\vz)$ varies with state and time, restoring a nontrivial geometry and enabling navigation across noise levels within a unified structure. Identifying clean data with spacetime points $(\vx,0)$, for which $p(\vx_0\mid \vx_0=\vx)=\delta_{\vx}$, lets geodesics \emph{connect clean endpoints through noisy intermediates}. This yields (i) a principled notion of distance between data as the length of the shortest spacetime path (Diffusion Edit Distance), and (ii) a mechanism for transition-path sampling via spacetime geodesics; both are demonstrated empirically in \cref{sec:experiments}.

\paragraph{Tractable energy estimation.}
Usually, the information-geometric energy of a discretized curve
$\vgamma=\{\vz_n\}_{n=0}^{N-1}$ is approximated via the local-KL approximation \citep{pmlr-v151-arvanitidis22b}:
\begin{align}
    \mathcal{E}(\vgamma) \approx (N-1) \sum_{n=0}^{N-2} \mathrm{KL}\Big[ p(\cdot \mid \vz_n) \,\big\|\, p(\cdot\mid \vz_{n+1}) \Big],
\end{align}
but such KLs are generally intractable, unless $p(\cdot | \vz)$ is a simple analytic distribution such as multinomial or Gaussian, which is not the case for denoising distributions $p(\vx_0 | \vx_t)$. Nonetheless, we show that in the specific case of the diffusion spacetime, the energy can be tractably estimated.

\begin{proposition}[Spacetime energy estimation - informal]\label{prop:denoising-expfamily}
The energy of discretized spacetime curve $\vgamma=\{\vz_n\}_{n=0}^{N-1}$ with $\vz_n=(\vx_{t_n},t_n)$ admits an approximation
\begin{equation}\label{eq:energy-approx-formula}
    \mathcal{E}(\vgamma) \approx \frac{N-1}{2} \sum_{n=0}^{N-2} \Big( \veta(\vz_{n+1}) -\veta(\vz_n) \Big)^\top \Big( \vmu(\vz_{n+1}) - \vmu(\vz_n) \Big),
\end{equation}
where 
\begin{align}\label{eq:natural-exp-parameters}
\veta(\vx_t, t) = \bigg(\frac{\alpha_t}{\sigma_t^2} \vx_t, \: -\frac{\alpha_t^2}{2\sigma_t^2} \bigg),
\qquad
\vmu(\vx_t, t) = \bigg( \E\big[ \vx_0\mid \vx_t\big], \: \E\Big[\|\vx_0\|^2 \mid \vx_t \Big] \bigg).
\end{align}
\end{proposition}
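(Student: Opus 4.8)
The plan is to first establish the exponential-family structure announced in the preamble, since the energy formula then follows from standard information geometry. I would begin from Bayes' rule, $p(\vx_0 \mid \vtx) = p(\vtx \mid \vx_0)\, q(\vx_0) / p_t(\vtx)$, and expand the Gaussian forward kernel (\autoref{eq:forward-kernel}). Writing out $\|\vtx - \alpha_t \vx_0\|^2$ and collecting only the terms that depend on $\vx_0$ gives
\begin{align*}
p(\vx_0 \mid \vtx) \;\propto\; q(\vx_0)\,\exp\!\Big( \tfrac{\alpha_t}{\sigma_t^2}\,\vtx^\top \vx_0 \;-\; \tfrac{\alpha_t^2}{2\sigma_t^2}\,\|\vx_0\|^2 \Big),
\end{align*}
where the proportionality absorbs every factor independent of $\vx_0$ (the $\|\vtx\|^2$ term, the normalizer, and $p_t(\vtx)$) into the log-partition $A(\veta)$. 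This is precisely an exponential family with base measure $q(\vx_0)$, sufficient statistic $T(\vx_0) = \big(\vx_0,\, \|\vx_0\|^2\big)$, and natural parameter $\veta(\vtx,t)$ exactly as in \autoref{eq:natural-exp-parameters}. The mean parameter is then the standard dual $\vmu = \E[T(\vx_0)] = \nabla_\veta A(\veta)$, reproducing $\vmu(\vtx,t) = \big(\E[\vx_0\mid\vtx],\, \E[\|\vx_0\|^2\mid\vtx]\big)$; these are the first two conditional moments of the denoiser, which is what ultimately makes the estimator simulation-free.

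Next I would connect this structure to the energy. Starting from the local-KL approximation $\mathcal{E}(\vgamma) \approx (N-1)\sum_n \mathrm{KL}[p(\cdot\mid\vz_n)\,\|\,p(\cdot\mid\vz_{n+1})]$ (itself a consequence of \autoref{eq:ig-measure} and the discretization of \autoref{eq:geodesic}), I would invoke the Bregman-divergence identity for exponential families: for two members with natural parameters $\veta_n,\veta_{n+1}$,
\begin{align*}
\mathrm{KL}[p_{\veta_n}\,\|\,p_{\veta_{n+1}}] = A(\veta_{n+1}) - A(\veta_n) - (\veta_{n+1}-\veta_n)^\top \vmu(\vz_n).
\end{align*}
Adding the forward and reverse KL cancels the $A$ terms and yields the exact symmetrized identity $\mathrm{KL}[p_{\veta_n}\|p_{\veta_{n+1}}] + \mathrm{KL}[p_{\veta_{n+1}}\|p_{\veta_n}] = (\veta_{n+1}-\veta_n)^\top (\vmu(\vz_{n+1}) - \vmu(\vz_n))$. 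Since forward and reverse KL agree to second order in $\veta_{n+1}-\veta_n$ — both equal $\tfrac12(\Delta\veta)^\top \nabla^2 A\,\Delta\veta$, with $\nabla^2 A = \Cov[T]$ the Fisher information in natural coordinates — each is approximately half the symmetrized quantity, so $\mathrm{KL}[p(\cdot\mid\vz_n)\|p(\cdot\mid\vz_{n+1})] \approx \tfrac12(\veta(\vz_{n+1})-\veta(\vz_n))^\top(\vmu(\vz_{n+1})-\vmu(\vz_n))$. Substituting into the local-KL sum gives \autoref{eq:energy-approx-formula} with the stated $\tfrac{N-1}{2}$ prefactor.

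I expect the main obstacle to be the error analysis rather than the algebra. The Bregman identity is exact, but two approximations are stacked: replacing the continuous energy by the local-KL sum, and replacing the one-sided KL by half the symmetrized KL. I would argue both errors are $o$ of the leading term as the curve is refined — the latter because forward and reverse KL differ only at third order in $\Delta\veta$ (controlled by the third cumulant of $T$ under $p(\vx_0\mid\vtx)$), and the former because the metric $\mathbf{G}_{\mathrm{IG}}$ is the pullback of $\Cov[T]$ through $\vz\mapsto\veta(\vz)$, so the Riemann sum converges to the energy integral. A secondary point needing care is that the natural parameter carries a nontrivial time component $-\alpha_t^2/(2\sigma_t^2)$, so moving along the spacetime curve perturbs $\veta$ through both $\vtx$ and $t$; the inner-product form handles this coupling automatically, but I would still verify that $\Cov[T]$ is finite, which is where sub-Gaussian control of $p(\vx_0\mid\vtx)$ enters.
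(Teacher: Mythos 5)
Your proposal is correct, and its first half—deriving via Bayes' rule that $p(\vx_0\mid\vtx)$ is an exponential family with base measure $q$, sufficient statistic $T(\vx_0)=(\vx_0,\|\vx_0\|^2)$, and the natural parameter of \autoref{eq:natural-exp-parameters}—coincides with the paper's \cref{prop:denoising-exponential-full}. Where you genuinely diverge is the passage from that structure to the discrete energy. The paper first computes the Fisher--Rao metric of a general exponential family in closed form, $\mathbf{G}_{\mathrm{IG}}(\vz)=\bigl(\tfrac{\partial\veta}{\partial\vz}\bigr)^{\top}\bigl(\tfrac{\partial\vmu}{\partial\vz}\bigr)$, substitutes it into the continuous energy integral to obtain $\mathcal{E}=\tfrac12\int_0^1\bigl(\tfrac{d}{ds}\veta(\vgamma_s)\bigr)^{\top}\bigl(\tfrac{d}{ds}\vmu(\vgamma_s)\bigr)ds$, and then discretizes by finite differences (\cref{cor:exp-en}); you instead start from the local-KL discretization of the energy and evaluate each term exactly via the Bregman identity and symmetrization. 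Your key identity $\mathrm{KL}[p_{\veta_n}\|p_{\veta_{n+1}}]+\mathrm{KL}[p_{\veta_{n+1}}\|p_{\veta_n}]=(\Delta\veta)^{\top}\Delta\vmu$ is precisely the paper's \cref{lem:sym-kl-exp}, which the paper proves but deploys only for the region-avoidance penalty in \cref{app:kl}, not for \cref{prop:denoising-expfamily} itself. Your route buys an exact intermediate quantity (the symmetrized KL), so the only approximations are the Riemann sum and the replacement of one-sided KL by half the symmetrized KL, both of which you correctly place at higher order; the paper's route buys the explicit metric tensor, which it needs anyway for the continuous length formula. The one piece you leave implicit is the tractable estimator for the second moment $\E[\|\vx_0\|^2\mid\vtx]$ via the divergence of the denoiser (\autoref{eq:mu-est}), which the paper obtains from the denoising-covariance identity; since that estimator is stated outside the proposition, your stopping at $\vmu=\E[T(\vx_0)\mid\vtx]$ is acceptable.
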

The proof (\cref{app:ig-en-proof}) consists of showing that denoising distributions form an exponential family, which admits a simplified energy formula. In practice, we calculate $\vmu(\vx_t, t)$ with Tweedie's formula over the  approximate denoiser $\hat{\vx}_0(\vx_t)$ (See \cref{app:denoising-exp} for details),
\begin{equation}\label{eq:mu-est}
\begin{split}
    \E\big[\vx_0 \mid \vx_t\big] &\approx \hat{\vx}_0(\vx_t)
    \\
    \E\Big[\|\vx_0\|^2 \mid \vx_t\Big] &\approx \big\|\hat{\vx}_0(\vx_t)\big\|^2 + \frac{\sigma_t^2}{\alpha_t} \operatorname{div}_{\vx_t}\hat{\vx}_0(\vx_t),
\end{split}
\end{equation}
where both $\hat{\vx}_0$ and $\operatorname{div}\hat{\vx}_0$ are computed efficiently via Hutchinson’s trick \citep{hutchinson1989stochastic,grathwohl2018scalable}, enabling the esimation of $\vmu(\vx_t, t)$ with a single Jacobian-vector product (JVP).

\begin{tcolorbox}[colback=blue!5!white, colframe=black!50!white]
Spacetime geodesics are simulation-free: the energy calculation requires only $N$ JVPs of the denoiser $\hat{\vx}_0$ for a curve discretized into $N$ points.
\end{tcolorbox}

\section{Experiments}\label{sec:experiments}

\subsection{Sampling trajectories}

\begin{figure}[th]
    \centering
    \includegraphics[width=1\linewidth]{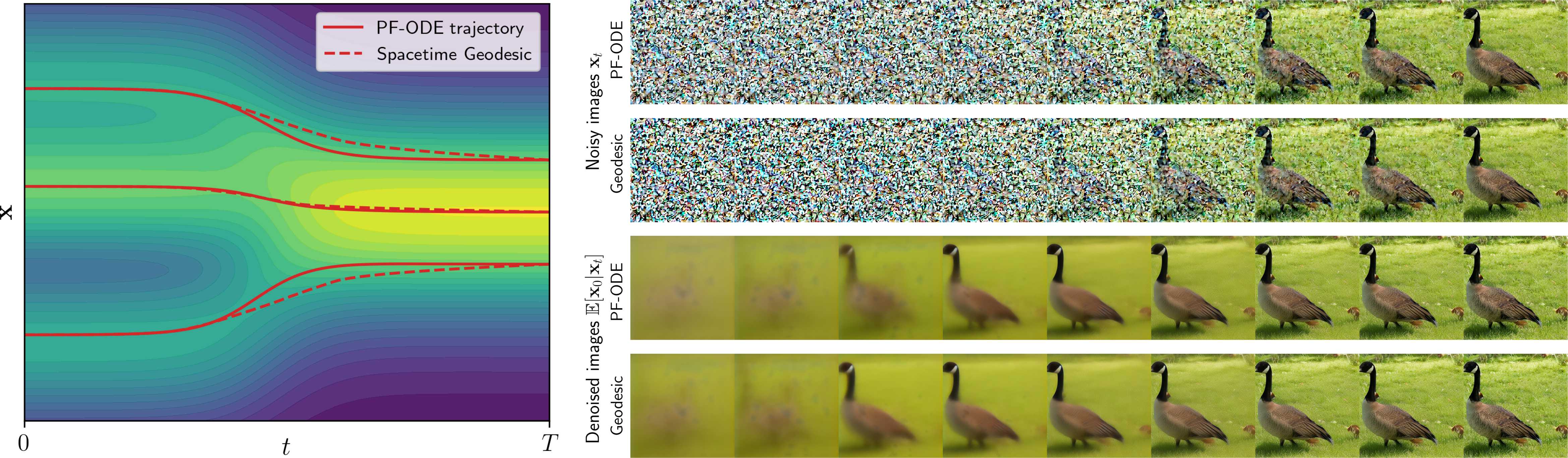}
    \caption{\textbf{PF-ODE paths are similar to energy-minimizing geodesics}. Left: Geodesics move in straighter lines than PF-ODE trajectories in 1D toy density. Right: Geodesics are almost indistinguishable to PF-ODE sampling in ImageNet-512 EDM2 model.}
    \label{fig:sampling-vs-pf-ode}
\end{figure}

We compare the trajectories obtained by solving the PF-ODE $\vx_0(\vx_T)$ (\autoref{eq:pf-ode}) with geodesics between the same endpoints $\vx_0,\vx_T$. For a toy example of 1D mixture of Gaussians, we observe the geodesics curving less than the PF-ODE trajectories in the early sampling (high $t$), while being indistinguishable for lower values of $t$ (See \cref{fig:sampling-vs-pf-ode} left and  \cref{app:gaussian-mixture-details} for details).

We find only marginal perceptual difference between the PF-ODE sampling trajectories and the geodesics in the EDM2 ImageNet-512 model \citep{karras2024analyzing}. The geodesic appears to generate information slightly earlier,
but the difference is minor (See \cref{fig:sampling-vs-pf-ode} right, and \cref{app:image-experiment-details} for details).\looseness=-1

We note that spacetime geodesics are not an alternative sampling method since they require knowing the endpoints beforehand.
An investigation into whether our framework can be used to improve sampling strategies is an interesting future research direction.

\subsection{Diffusion Edit Distance}\label{sec:diffed}
\begin{figure}
    \centering
    \includegraphics[width=1\linewidth]{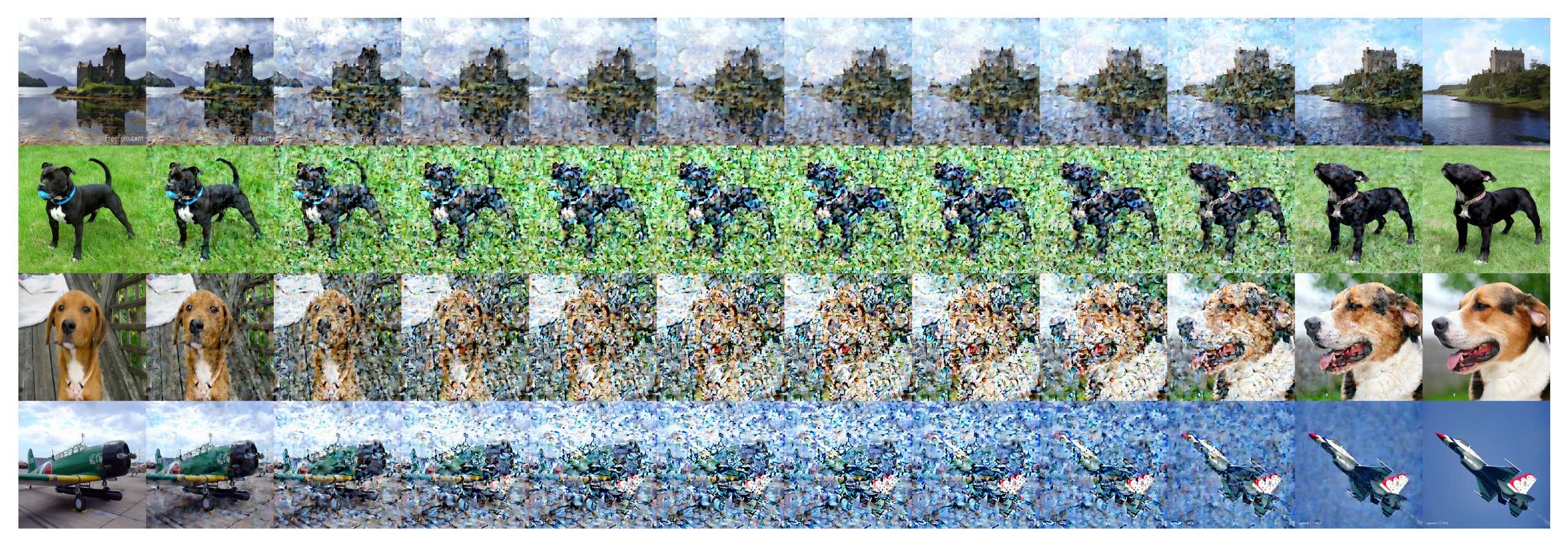}
    \caption{\textbf{Spacetime geodesics between images.} Each row shows a geodesic $\vgamma$ between clean images. The path passes through noisy states and then denoises, realizing the minimal total edit between endpoints. Its length $\ell(\vgamma)$ is the Diffusion Edit Distance (DiffED), which measures how much the denoising distribution changes along the optimal traversal.}
    \label{fig:diff-edit}
\end{figure}
The spacetime geometry yields a principled distance on the data space. We identify clean datum $\vx\in\mathbb{R}^d$ with the spacetime point $(\vx,0)$, corresponding to the Dirac denoising distribution $\delta_{\vx}$. Given two points $\vx^a,\vx^b$, we define the \emph{Diffusion Edit Distance} (DiffED) by
\begin{equation}
    \mathrm{DiffED}(\vx^a, \vx^b) = \ell(\vgamma),
\end{equation}
where $\vgamma$ is the spacetime geodesic between $(\vx^a,0)$ and $(\vx^b,0)$. For numerical stability, we anchor endpoints at a small $t_{\min}>0$ rather than at $0$. See \cref{alg:diff-edit} for DiffED pseudocode.

A spacetime geodesic links two clean data points through intermediate noisy states. It can be interpreted as the minimal sequence of edits: add just enough noise to discard information specific to $\vx^a$, then remove noise to introduce information specific to $\vx^b$. The path length is the total edit cost, which is measured by how much the denoising distribution changes along the path. \cref{fig:diff-edit} visualizes the spacetime geodesics: as endpoint similarity decreases, the intermediate points become noisier.

We quantitatively evaluate $\mathrm{DiffED}$ on image data. First, we ask whether $\mathrm{DiffED}$ correlates with human perception as approximated by Learned Perceptual Image Patch Similarity (LPIPS) \citep{zhang2018unreasonable}. We randomly selected 10 classes in the ImageNet dataset and sampled 20 random image pairs for each. We then evaluated the $\mathrm{DiffED}$ and LPIPS for each image pair, and found the correlation to be very low at approximately -7\%, suggesting that perceptual similarity and geometric edit cost capture different notions of closeness. We found DiffED to be more closely related to the structural similarity index measure (SSIM) \citep{ssim}, which correlates at 53\% with DiffED.

To qualitatively compare different notions of image similarity, we order image pairs by their similarity evaluated with multiple metrics: DiffED, LPIPS, SSIM, and Euclidean. We show the results in \cref{fig:diff-edit-qual}.

\subsection{Transition path sampling}\label{sec:tps}

\begin{figure}[ht]
    \centering
    \includegraphics[width=1\linewidth]{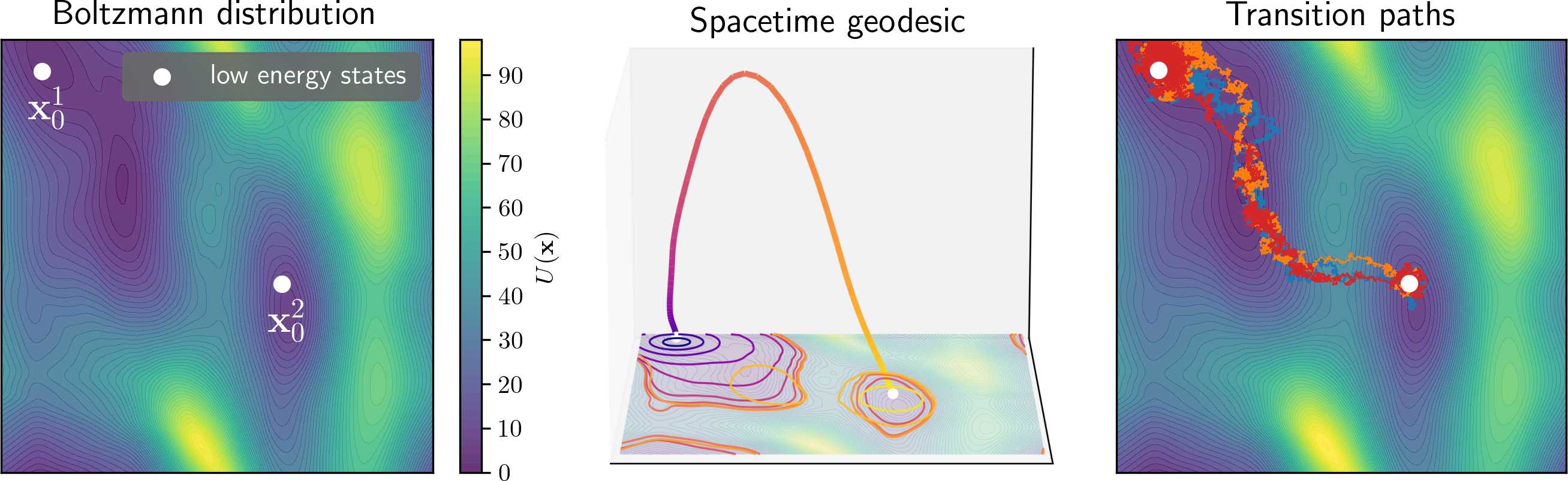}
    \caption{\textbf{Spacetime geodesics enable sampling transition paths between low-energy states.} Left: Alanine Dipeptide energy landscape wrt two dihedral angles, with two energy minima $\vx_0^1, \vx_0^2$.
    Middle: Spacetime geodesic $\vgamma$ connecting $\vx_0^1$ and $\vx_0^2$. Right: Annealed Langevin transition path samples.\looseness=-1}
    \label{fig:tps}
\end{figure}

Another application of the spacetime geometry is the problem of transition-path sampling \citep{holdijk2023stochastic,du2024doobs,raja2025actionminimization}, whose goal is to find probable transition paths between low-energy states. We assume a Boltzmann distribution 
\begin{align}
    q(\vx) \propto \exp(-U(\vx)),
\end{align}
where $U$ is a known energy function, which is a common assumption in molecular dynamics. In this setting, the denoising distribution follows a tractable energy function (See \autoref{eq:denoising-energy-deriv})
\begin{align} \label{eq:denoising-energy}
    p(\vx_0 | \vtx) \propto q(\vx_0) p(\vtx | \vx_0) \propto \exp \bigg(  \underbrace{- U(\vx_0) - \tfrac{1}{2}\mathrm{SNR}(t)\Big\|\vx_0 - \vtx/\alpha_t\Big\|^2}_{- U(\vx_0 | \vtx)}\bigg).
\end{align}
To construct a transition path between two low-energy states $\vx_0^1$ and $\vx_0^2$, we estimate the spacetime geodesic $\vgamma$ between them using a denoiser model $\hat{\vx}_0(\vtx) \approx \E[\vx_0|\vtx]$ with \cref{prop:denoising-expfamily}, as shown in \cref{fig:tps}. At each interpolation point $s \in [0, 1]$, the geodesic defines a denoising Boltzmann distribution $p(\vx | \vgamma_s)$ where $U(\vx | \vgamma_s)$ is the energy at that spacetime location. See \cref{app:molecular-details} for details.\looseness=-1

\paragraph{Annealed Langevin Dynamics.} 
To sample transition paths, we use Langevin dynamics
\begin{equation}\label{eq:ld-gamma}
    d\vx = -\nabla_\vx U(\vx|\vgamma_s)dt + \sqrt{2}d\rW_t,
\end{equation}
whose stationary distributions are $p(\vx \: | \: \vgamma_s) \propto \exp(-U(\vx|\vgamma_s))$ for any $s$. To obtain the trajectories from $\vx_0^1$ to $\vx_0^2$, we gradually increase $s$ from $0$ to $1$ using annealed Langevin \citep{song2019generative}. 
After discretizing the geodesic into $N$ points $\vgamma_n$, we alternate between taking $K$ steps of \autoref{eq:ld-gamma} conditioned on $\vgamma_n$ and updating $\vgamma_n \mapsto \vgamma_{n+1}$, as described in \cref{alg:tps}. This approach assumes that $p(\vx | \vgamma_n)$ is \emph{close} to $p(\vx | \vgamma_{n+1})$, and thus $ \vx \sim p(\vx | \vgamma_n)$ is a good starting point to Langevin dynamics conditioned on $\vgamma_{n+1}$. 

\begin{table}[h!]
\caption{\textbf{Spacetime geodesics outperform methods tailored to transition path sampling}. Parentheses denote extra energy evaluations used to generate training data for the base diffusion model, which do not scale with the number of generated paths. Baseline details in \cref{app:baseline-issues}.}
\label{tab:tps-results}
\centering
\begin{tabular}{l c c}
\toprule
 & \textbf{MaxEnergy} (↓) & \textbf{\# Evaluations} (↓) \\
 \cmidrule{1-3}
Lower Bound       & 36.42 & N/A \\
\cmidrule{1-3}
MCMC-fixed-length              & 42.54 \textcolor{gray}{± 7.42} & 1.29B \\
MCMC-variable-length              & 58.11 \textcolor{gray}{± 18.51} & 21.02M \\
Doob's Lagrangian \citep{du2024doobs} & 66.24 \textcolor{gray}{± 1.01} & 38.4M \\
\cmidrule{1-3}
Spacetime geodesic (Ours)              & \textbf{37.36} \textcolor{gray}{± 0.60} & 16M \textcolor{gray}{(+16M)}\\
\bottomrule
\end{tabular}
\end{table}

\paragraph{Alanine dipeptide.} We compute a spacetime geodesic connecting two molecular configurations of Alanine Dipeptide, as in \citet{holdijk2023stochastic}. In \cref{fig:tps}, the energy landscape is visualized over the dihedral angle space, with a neural network used to approximate the potential energy $U$. Using our trained denoiser $\hat{\vx}_0(\vx_t)$, we estimate the expectation parameter $\vmu$, which allows us to compute and visualize a geodesic trajectory through spacetime. Transition paths were generated using \cref{alg:tps}. See \cref{app:molecular-details} for details.
\begin{figure}
    \centering
    \includegraphics[width=1\linewidth]{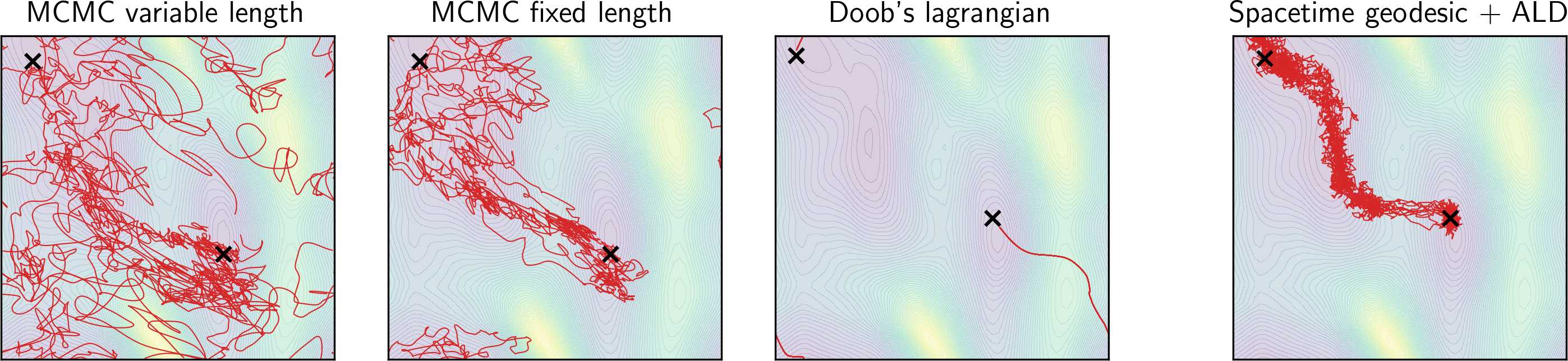}
    \caption{\textbf{Transition paths generated with a spacetime geodesic avoid high-energy regions without collapsing to a single path.} Compared with MCMC baselines, the spacetime-geodesic method yields transition paths that better avoid high-energy areas, whereas Doob’s Lagrangian collapses to generating nearly identical trajectories. Ten sample paths are shown for each method.}

    \label{fig:tps-qualitative}
\end{figure}

\paragraph{Baselines.} We considered \citet{holdijk2023stochastic,du2024doobs,raja2025actionminimization} and adopt Doob’s Lagrangian \citep{du2024doobs}; the others were excluded due to reproducibility issues (see \cref{app:baseline-issues}). We also evaluate two MCMC two-way shooting variants \citep{brotzakis2016one}-uniform point selection with variable or fixed trajectory length-using transition paths from the official \citet{du2024doobs} code release. For each method we generate 1{,}000 paths and report mean MaxEnergy (lower is better) and its numerical lower bound $\min_{\vgamma}\max_s U(\vgamma_s)$, along with the number of energy evaluations needed for 1{,}000 paths. To train a base diffusion model for our method, we generated data using Langevin dynamics  (16M\footnote{+16M is the number of energy function evaluations to generate the training set with Langevin dynamics for the base diffusion model. We did not tune this number, and fewer evaluations may yield comparable performance.} energy evaluations), a one-time cost that does not scale with the number of generated transition paths.

\paragraph{Results.} We show in \cref{tab:tps-results} that our method outperforms the baselines in the MaxEnergy obtained along the transition paths. It is also considerably closer to the lower bound than to the next best baseline (MCMC-fixed length) while requiring several orders of magnitude fewer energy function evaluations. In \cref{fig:tps-qualitative}, we show a qualitative comparison of transition paths generated with our method and the baselines. Our proposed method shows improved efficiency in avoiding high-energy regions compared to MCMC. In contrast, the Doob's Lagrangian method converged to a suboptimal solution, producing nearly identical transition paths. We discuss this in more detail in \cref{app:baseline-issues}.

\begin{algorithm}
\caption{Transition Path Sampling with Annealed Langevin Dynamics}\label{alg:tps}
\begin{algorithmic}[1]
\Require $\vx_a, \vx_b \in \R^D$ endpoints, $N_{\vgamma} > 0$, $T > 0$, $t_{\mathrm{min}}$, $dt$
\State $\vgamma \gets \textsc{SpacetimeGeodesic}(\vx_a,\vx_b)$ \Comment{Approximate spacetime geodesic with \cref{alg:geodesic}}
\State $\mathcal{T} \gets \{\vx := \vx_a\}$\Comment{Initialize chain $\mathcal{T}$ at $\vx_a$}  
\For {$n \in \{0, \dots, N_\vgamma - 1\}$}\Comment{Iterate over the points on the geodesic $\vgamma_n$}
\For {$t \in \{1, \dots, T\}$}
\State $\boldsymbol\varepsilon \sim \mathcal{N}(\mathbf{0}, \mI)$\Comment{Sample Gaussian noise}
\State $\vx \gets \vx - \nabla_\vx U(\vx|\vgamma_n)dt + \sqrt{2dt}\boldsymbol\varepsilon$\Comment{Langevin update}
\State $\mathcal{T} \gets \mathcal{T} \cup \{\vx\}$ \Comment{Append state $\vx$ to chain}
\EndFor
\EndFor
\State \Return $\mathcal{T}$\Comment{Return chain}
\end{algorithmic}
\end{algorithm}

\subsection{Constrained path sampling}

Suppose we would like to impose additional constraints along the geodesic interpolants. This corresponds to penalized optimization (\citet{rygaard2025likely} also explore regularized geodesics)
\begin{equation}\label{eq:constr-opt}
    \min_{\vgamma} \left\{\mathcal{E}(\vgamma) + \lambda\int_0^1 h(\vgamma_s)ds, \quad \mathrm{s.t.} \quad \vgamma_0 = (\vx_0^1, 0), \vgamma_1 = (\vx_0^2, 0) \right\},
\end{equation}
where $h : \R \times \R^D \to \R$ is some penalty function with $\lambda > 0$. We demonstrate the principle by (i) penalizing transition path variance, and (ii) imposing regions to avoid in the data space..

\paragraph{Low-variance transitions.}
Suppose we want the posterior $p(\vx\mid \vgamma_s)$ to have a low variance. This concentrates the path around a narrower set of plausible states, more repeatable trajectories, albeit at the cost of reduced coverage. By \autoref{eq:denoising-covariance}, higher $\mathrm{SNR}(t)$ yields lower denoising variance, so we implement this by penalizing low SNR via $h(\vx, t) = \max(- \log \mathrm{SNR}(t), \rho)$ for some threshold $\rho$.

\paragraph{Avoiding restricted regions.}
Suppose we want to avoid certain regions in the data space in the transition paths. We encode the region to avoid as a denoising distribution $p(\cdot | \vz^*)$ for some $\vz^* =(\vx_t^*, t^*)$ where larger the $t^*$, larger the restricted region.
We encode the penalty as KL distance between the denoising distributions (See \cref{app:kl} for the derivation)
\begin{align}
    \kl\Big[ p(\cdot | \vz^*) || p(\cdot | \vgamma_s) \Big] &= \int_0^s \left( \tfrac{d}{du} \veta(\vgamma_u)\right)^\top \left( \vmu(\vgamma_u) - \vmu(\vz^*)\right)du + C \\
    h(\vgamma_s) &= \min\left( \rho, -\kl\Big[ p(\cdot | \vz^*) || p(\cdot | \vgamma_s) \Big] \right).
\end{align}

In \cref{fig:transition_comparison}, we compare spacetime geodesics (unconstrained) with low-variance, and region-avoiding spacetime curves. We visualize both the curves and the corresponding transition paths generated with \cref{alg:tps}. This demonstrates that our framework with the penalized optimization (\autoref{eq:constr-opt}) can incorporate various preferences on the transition paths.


\begin{figure}[h]
    \centering
    \includegraphics[width=\linewidth]{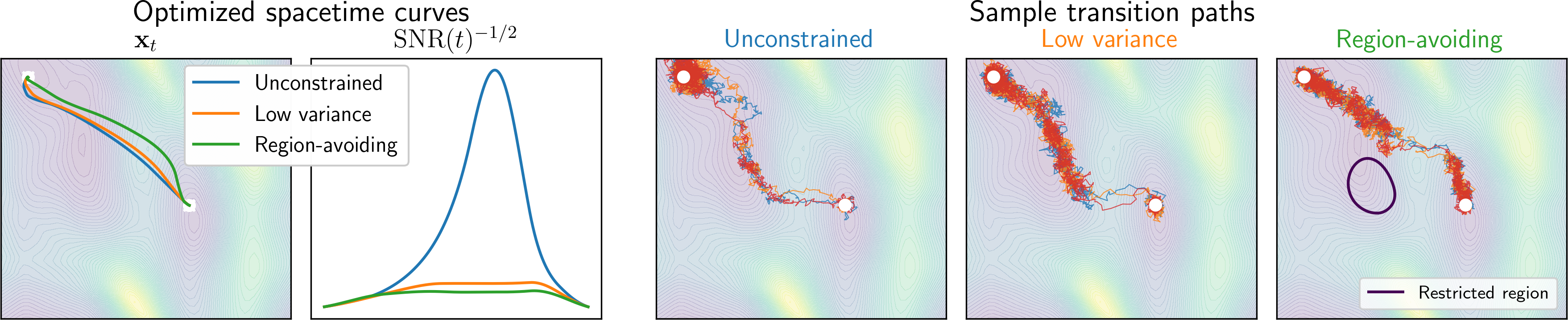}
    \caption{\textbf{Vanilla transition paths can be constrained to have lower variance, or successfully avoid a restricted region $p(\cdot | \vz^*)$}. Left: geodesics $\vgamma$. Right: transition paths $\mathcal{T}$.}
    \label{fig:transition_comparison}
\end{figure}

\section{Related works}
We review three directions of research related to ours: (i) studies of latent noise in diffusion models, (ii) applications of information geometry in generative modeling, and (iii) geometric formulations for sampling efficiency.  

\paragraph{Latent–data geometry.}
Several works analyze the relation between latent noise $\vx_t$ and data $\vx_0$.  
\cite{yu2025probability} define a geodesic density in diffusion latent space;  
\cite{park2023understanding} apply Riemannian geometry to lower-dimensional latent codes;  
\cite{karczewski2025devil} study how noise scaling affects log-densities and perceptual detail.  
Our work also investigates the $\vx_t$ to $\vx_0$ relationship but (a) uses the Fisher–Rao metric rather than an inverse-density metric, (b) retains the full-dimensional latent space without projection, and (c) analyzes the complete diffusion path across all timesteps.

\paragraph{Information geometry in generative models.}
\cite{lobashevhessian} introduce the Fisher–Rao metric on families $p(\vx|\theta)$ to study phase-like transitions, where $\theta$ is a low-dimensional variable parametrizing a microstate $\vx$. In contrast, we place the geometry on diffusion's explicit spacetime coordinates $\vz = (\vx_t, t)$, induced by the denoising posterior $p(\vx_0|\vx_t)$.

\paragraph{Geometric approaches to sampling.}
Two recent works also formulate diffusion models geometrically to improve sampling efficiency.  
\cite{das2023image} optimize the forward noising process by following the shortest geodesic between $p_0$ and $p_t$ under the Fisher-Rao metric, assuming $p_0(\vx_0)$ to be Gaussian. \cite{ghimire2023geometry} model both the forward and reverse processes as Wasserstein gradient flows. Our contribution differs: we use information geometry (not optimal transport), focus on the reverse process (not the forward), and only require $p_0$ to admit a density.

\section{Limitations}\label{sec:limitations}
Although our framework defines geodesics between any noisy samples, optimizing between nearly clean ones is numerically unstable because their denoising distributions collapse to Dirac deltas, making Fisher-Rao (via local KL) distances effectively infinite. Therefore, consistent with diffusion practice \citep{song2020score,lu2022maximum}, we choose endpoints with non-negligible noise for tractable optimization (details in \cref{app:experiment-details}).

The proposed distance metric DiffED (\cref{sec:diffed}) is considerably slower (details in \cref{app:image-experiment-details}) than established image similarity metrics such as LPIPS \citep{zhang2018unreasonable}, or SSIM \citep{ssim}. Exploring a distillation strategy involving training a separate model trained to predict DiffED is a possible future research direction.


\section{Conclusion}
We proposed a novel perspective on the latent space of diffusion models by viewing it as a $(D+1)$-dimensional statistical manifold, with the Fisher-Rao metric inducing a geometrical structure. By leveraging the fact that the denoising distributions form an exponential family, we showed that we can tractably estimate geodesics even for high-dimensional image diffusion models. We visualized our methods for image interpolations and demonstrated their utility in molecular transition path sampling.

This work deepens our understanding of the latent space in diffusion models and has the potential to inspire further research, including the development of novel applications of the spacetime geometric framework, such as enhanced sampling techniques.

\subsection*{Reproducibility statement}
We include the source code in our submission, which allows for reproducing the results. Our claims made in the main text are proven in the appendices. Experiment details can be found in \cref{app:experiment-details}.

\subsection*{Ethics statement}
The use of generative models, especially those capable of producing images and videos, poses considerable risks for misuse. Such technologies have the potential to produce harmful societal effects, primarily through the spread of disinformation, but also by reinforcing harmful stereotypes and implicit biases. In this work, we contribute to a deeper understanding of diffusion models, which currently represent the leading methodology in generative modeling. While this insight may eventually support improvements to these models, thereby increasing the risk of misuse, it is important to note that our research does not introduce any new ethical risks beyond those already associated with generative AI.

We have used Large Language Models to polish writing on a sentence level.

\subsection*{Acknowledgments}
This work was supported by the Finnish Center for Artificial Intelligence (FCAI) under Flagship R5 (award 15011052).
SH was supported by research grants from VILLUM FONDEN (42062), the Novo Nordisk Foundation through the Center for Basic Research in Life Science (NNF20OC0062606), and the European Research Council (ERC) under the European Union's Horizon Programme (grant agreement 101125003). VG acknowledges the support from Saab-WASP (grant 411025), Academy of Finland (grant 342077), and the Jane and Aatos Erkko Foundation (grant 7001703).

\bibliography{refs}
\bibliographystyle{iclr2026_conference}

\appendix
\crefalias{section}{appendix}
\crefalias{subsection}{appendix}
\crefname{appendix}{Appendix}{Appendices}
\section{Notation}
We denote $\vx=(x^1, \dots, x^D)^\top \in \R^D$ a point in $D$-dimensional Euclidean space (a column vector), $\mathrm{Tr}(\mA)=\sum_i A_{ii}$ - the trace operator of a square matrix $\mA \in \R^{k \times k}$.

\paragraph{Differential operators.}
For a scalar function $f:\R^D \to \R, \vx \mapsto f(\vx) \in \R$, we denote 
\begin{align*}
    \text{gradient:}& \quad\nabla_\vx f(\tilde\vx) = \left(\frac{\partial f}{\partial x^1}, \dots, \frac{\partial f}{\partial x^D}\right)^{\top}\Bigg\rvert_{\vx=\tilde\vx} &\in \R^D \\
    \text{Hessian:}& \quad \nabla_\vx^2 f(\tilde\vx) = \left[ \frac{\partial^2f}{\partial x^i \partial x^j} \right]_{i,j}\Bigg\rvert_{\vx=\tilde\vx} &\in \R^{D \times D} \\
    \text{Laplacian:}& \quad \Delta_\vx f(\tilde\vx) = \mathrm{Tr}\left(\nabla_\vx^2 f(\tilde\vx)\right) = \sum_{i=1}^D \frac{\partial^2 f}{\partial (x^i)^2}\Bigg\rvert_{\vx=\tilde\vx} &\in \R.
\end{align*}
For a curve $\vgamma : [0, 1] \to \R^k, s \mapsto \vgamma_s \in \R^k$ we denote
\begin{align*}
    \text{time derivative:}& \quad \dot{\vgamma}_s = \frac{d}{ds}\vgamma_s \in \R^k.
\end{align*}
For a vector valued function $f : \R^k \to \R^m, \vx \mapsto (f^1(\vx), \dots, f^m(\vx))^{\top} \in \R^m$ we denote
\begin{align*}
    \text{Jacobian:}& \quad \frac{\partial f(\tilde \vx)}{\partial \vx} =\left[ \frac{\partial f^i}{\partial x^j} \right]_{i,j}\Bigg\rvert_{\vx=\tilde\vx} \in \R^{m \times k}
\end{align*}
When $k=m$, we define
\begin{align*}
    \text{divergence: } \quad \div_\vx f(\tilde \vx) = \mathrm{Tr}\left(\frac{\partial f(\tilde \vx)}{\partial \vx}\right) = \sum_{i=1}^k \frac{\partial f^i}{\partial x^i}\Bigg\rvert_{\vx=\tilde\vx} \in \R
\end{align*}
\paragraph{Functions with two arguments.}
For $f: \R^{k_1} \times \R^{k_2} \to \R, (\vx_1, \vx_2) \mapsto f(\vx_1, \vx_2) \in \R$ we define (analogously w.r.t. second argument)
\begin{align*}
    \text{gradient w.r.t. first argument:}& \quad \nabla_{\vx_1}f(\tilde \vx_1, \tilde \vx_2) = \left(\frac{\partial f}{\partial x_1^1}, \dots, \frac{\partial f}{\partial x_1^{k_1}}\right)^{\top}\Bigg\rvert_{(\vx_1, \vx_2)=(\tilde\vx_1, \tilde\vx_2)} \in \R^{k_1}
\end{align*}
For $f: \R^{k_1} \times \R^{k_2} \to \R^m, (\vx_1, \vx_2) \mapsto (f^1(\vx_1, \vx_2), \dots, f^m(\vx_1, \vx_2))^{\top} \in \R^m$ we define (analogously w.r.t. second argument)
\begin{align*}
    \text{Jacobian w.r.t. first argument:}& \quad \frac{\partial f(\tilde\vx_1, \tilde\vx_2)}{\partial \vx_1} =\left[ \frac{\partial f^i}{\partial x_1^j} \right]_{i,j}\Bigg\rvert_{(\vx_1, \vx_2)=(\tilde\vx_1, \tilde\vx_2)} \in \R^{m \times k_1}
\end{align*}

\section{Pullback Geometry in Diffusion models}\label{app:diff-pullback}
\begin{lemma}\label{lem:pb-len-en}
    Let $\mathcal{Z}$ be a latent space, $\mathcal{X}=\mathbb{R}^d$ a data space, and $f:\mathcal{Z}\to\mathcal{X}$ a decoder. Then the length and energy of a curve $\vgamma: [0, 1] \to \mathcal{X}$ under the pullback geometry are given by
    \begin{align}
        \ell_\mathrm{PB}(\vgamma) &= \int_0^1 \left\| \tfrac{d}{ds}f(\vgamma_s)\right\|ds \\
        \mathcal{E}_\mathrm{PB}(\vgamma) &= \frac{1}{2}\int_0^1 \left\| \tfrac{d}{ds}f(\vgamma_s)\right\|^2ds,
    \end{align}
    where $\|\cdot\|$ is the Euclidean norm.
\end{lemma}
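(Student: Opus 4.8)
The plan is to reduce the whole statement to a single application of the chain rule, since the pullback metric is constructed precisely so that this works out. First I would recall the definitions of length and energy of a curve under an arbitrary metric tensor $\mathbf{G}$, as in \autoref{eq:geodesic}. Regarding $\vgamma$ as the curve in the latent domain $\mathcal{Z}$ on which $\mathbf{G}_\mathrm{PB}$ is defined, with decoded image $f(\vgamma_s)\in\mathcal{X}=\mathbb{R}^d$, these read
\begin{align}
    \ell(\vgamma) = \int_0^1 \sqrt{\dot\vgamma_s^\top \mathbf{G}(\vgamma_s)\dot\vgamma_s}\,ds, \qquad \mathcal{E}(\vgamma) = \frac{1}{2}\int_0^1 \dot\vgamma_s^\top \mathbf{G}(\vgamma_s)\dot\vgamma_s\,ds.
\end{align}
I would then substitute the specific pullback metric $\mathbf{G}_\mathrm{PB}(\vz) = \big(\partial f/\partial \vz\big)^\top \big(\partial f/\partial \vz\big)$ into both integrands.

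The core step is to simplify the shared quadratic form. Writing $\mathbf{J}(\vz) := \partial f/\partial \vz$ for the decoder Jacobian, I would compute
\begin{align}
    \dot\vgamma_s^\top \mathbf{G}_\mathrm{PB}(\vgamma_s)\dot\vgamma_s = \dot\vgamma_s^\top \mathbf{J}(\vgamma_s)^\top \mathbf{J}(\vgamma_s)\dot\vgamma_s = \big\| \mathbf{J}(\vgamma_s)\dot\vgamma_s\big\|^2,
\end{align}
using only $\vv^\top\vv=\|\vv\|^2$ with $\vv=\mathbf{J}(\vgamma_s)\dot\vgamma_s$. The chain rule then gives $\mathbf{J}(\vgamma_s)\dot\vgamma_s = \tfrac{d}{ds}f(\vgamma_s)$, the velocity of the decoded curve in data space, so the integrand equals $\big\|\tfrac{d}{ds}f(\vgamma_s)\big\|^2$. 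Substituting this identity back into the two integrals yields $\ell_\mathrm{PB}(\vgamma) = \int_0^1 \big\|\tfrac{d}{ds}f(\vgamma_s)\big\|\,ds$ and $\mathcal{E}_\mathrm{PB}(\vgamma) = \tfrac{1}{2}\int_0^1 \big\|\tfrac{d}{ds}f(\vgamma_s)\big\|^2\,ds$, as claimed.

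I do not expect a genuine obstacle here: the result is essentially the definition of a pullback metric combined with the chain rule, and the computation is one line once the definitions are unfolded. The only points deserving care are mild regularity assumptions — it suffices that $f$ be differentiable and $\vgamma$ be piecewise $C^1$, so that $\tfrac{d}{ds}f(\vgamma_s)$ exists almost everywhere and the Jacobian--velocity identity holds pointwise. It is worth emphasizing in the write-up that the derivation nowhere uses injectivity of $f$, and that the sole quantity governing both length and energy is the Euclidean speed of the \emph{decoded} image curve $f(\vgamma_s)$; this observation is exactly what is later exploited to conclude that pullback geodesics must decode to straight segments.
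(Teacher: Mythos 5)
Your proposal is correct and follows essentially the same route as the paper's proof: recall the general length/energy functionals, substitute the pullback metric, collapse the quadratic form to $\|\mathbf{J}(\vgamma_s)\dot\vgamma_s\|^2$, and apply the chain rule to identify this with the Euclidean speed of the decoded curve. Your added remarks on regularity and on injectivity being unnecessary are accurate but do not change the argument.
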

\begin{proof}
    For a general Riemannian metric $\mathbf{G}$, the length, and energy are given by
    \begin{align}
        \ell_\mathbf{G}(\vgamma) &= \int_0^1 \| \dot{\vgamma}_s\|_\mathbf{G}ds \\
        \mathcal{E}_\mathbf{G}(\vgamma) &= \frac{1}{2} \int_0^1 \| \dot{\vgamma}_s\|^2_\mathbf{G}ds,
    \end{align}
    where $\| \dot{\vgamma}_s \|^2_\mathbf{G} = \dot{\vgamma}_s^\top G(\vgamma_s) \dot{\vgamma}_s$. For $\mathbf{G}=\mathbf{G}_{\mathrm{PB}}$ induced by the decoder $f$, we have
    \begin{equation}
        \mathbf{G}_{\mathrm{PB}}(\vz) = \left(\frac{\partial f}{\partial \vz}(\vz)\right)^\top \left( \frac{\partial f}{\partial \vz}(\vz)\right),
    \end{equation}
    which leads to
    \begin{equation}
    \begin{split}
        \| \dot{\vgamma}_s \|^2_{\mathbf{G}_{\mathrm{PB}}} &= \dot{\vgamma}_s^\top \left(\frac{\partial f}{\partial \vz}(\vgamma_s)\right)^\top \left( \frac{\partial f}{\partial \vz}(\vgamma_s)\right) \dot{\vgamma}_s = \left(\frac{\partial f}{\partial \vz}(\vgamma_s) \dot{\vgamma}_s\right)^\top\left( \frac{\partial f}{\partial \vz}(\vgamma_s) \dot{\vgamma}_s\right) \\
        & \stackrel{(*)}{=} \left( \tfrac{d}{ds}f(\vgamma_s) \right)^\top \left( \tfrac{d}{ds}f(\vgamma_s) \right) = \left\| \tfrac{d}{ds}f(\vgamma_s)\right\|^2,
    \end{split}
    \end{equation}
    where $(*)$ follows from the chain rule.
\end{proof}
\begin{proposition}[Pullback geodesics decode to straight lines]
Let $\mathcal{Z}$ be a latent space, $\mathcal{X}=\mathbb{R}^d$ a data space, and $f:\mathcal{Z}\to\mathcal{X}$ a bijective decoder. Fix $\vz^a,\vz^b\in\mathcal{Z}$ and write $\vx^a=f(\vz^a)$, $\vx^b=f(\vz^b)$. Then any shortest path between $\vz^a$ and $\vz^b$ in the pullback geometry decodes to the straight segment from $\vx^a$ to $\vx^b$.
\end{proposition}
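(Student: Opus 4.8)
The plan is to reduce the latent-space minimization to an ordinary Euclidean length minimization in data space through the decoder $f$, and then invoke the classical fact that straight lines are the unique length-minimizers in $\mathbb{R}^d$. The essential input is \cref{lem:pb-len-en}, which already identifies the pullback length of a curve with the Euclidean length of its decoded image; the remaining work is to transfer the minimization across $f$ and characterize the Euclidean minimizer.

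First, I would use bijectivity of $f$ to set up a correspondence between admissible curves. Composing with $f$ maps curves $\vgamma:[0,1]\to\mathcal{Z}$ with $\vgamma_0=\vz^a,\vgamma_1=\vz^b$ bijectively onto curves $c=f\circ\vgamma:[0,1]\to\mathcal{X}$ with $c_0=\vx^a,c_1=\vx^b$, the inverse being $\vgamma=f^{-1}\circ c$ (implicitly using that $f$ is a diffeomorphism, as is already needed for the pullback metric to be well defined). By \cref{lem:pb-len-en}, $\ell_\mathrm{PB}(\vgamma)=\int_0^1\|\dot c_s\|\,ds$, so minimizing the pullback length over admissible $\vgamma$ is \emph{equivalent} to minimizing Euclidean arc length over admissible decoded curves $c$.

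Second, I would apply the norm-of-integral inequality: for any such $c$,
\[
\int_0^1\|\dot c_s\|\,ds \ \ge\ \Big\|\int_0^1\dot c_s\,ds\Big\| \ =\ \|\vx^b-\vx^a\|,
\]
with equality if and only if $\dot c_s$ is a nonnegative multiple of the fixed direction $\vx^b-\vx^a$ for almost every $s$. Together with the endpoint constraint, this equality condition forces $c$ to trace exactly the straight segment from $\vx^a$ to $\vx^b$, traversed monotonically. Pulling back through $f^{-1}$, the image of any shortest path under $f$ is therefore precisely the straight segment, i.e., it decodes to that segment, as claimed.

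The main point to handle carefully is that uniqueness holds only at the level of the decoded \emph{image} (trace): since length is reparametrization-invariant, the straight segment is the unique minimizing set but not a unique parametrized curve, and correspondingly $\vgamma^\star$ is determined only up to reparametrization. If a canonical representative is wanted, one can minimize the energy $\mathcal{E}_\mathrm{PB}$ from \cref{lem:pb-len-en} instead; by Cauchy--Schwarz one has $\ell_\mathrm{PB}(\vgamma)^2\le 2\,\mathcal{E}_\mathrm{PB}(\vgamma)$ with equality iff the decoded curve has constant speed, which singles out the affine parametrization $\vx_s=(1-s)\vx^a+s\vx^b$ and recovers the energy value $\tfrac12\|\vx^a-\vx^b\|^2$ stated in the main text.
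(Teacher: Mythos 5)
Your proposal is correct and follows essentially the same route as the paper's proof: reduce pullback length to Euclidean arc length of the decoded curve via \cref{lem:pb-len-en}, then apply the norm-of-integral inequality to identify $\|\vx^b-\vx^a\|$ as the minimum. You are in fact slightly more careful than the paper, which only verifies that the preimage of the straight segment attains the lower bound; your explicit use of the equality condition (that $\dot c_s$ must be a nonnegative multiple of $\vx^b-\vx^a$ a.e.) is exactly what is needed to justify the stronger claim that \emph{every} shortest path decodes to the segment, and your remark on reparametrization invariance versus energy minimization is a worthwhile clarification.
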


\begin{proof}
Let $\vx_s=(1-s)\,\vx^a+s\,\vx^b$ for $s\in[0,1]$. Because $f$ is bijective, define its latent preimage
\[
\vgamma^\star_s \;=\; f^{-1}(\vx_s), \qquad s\in[0,1].
\]

The pullback length of a latent curve $\vgamma$ is the Euclidean length of its image (\cref{lem:pb-len-en}):
\[
\ell_\mathrm{PB}(\vgamma)\;=\;\int_0^1 \Big\|\tfrac{d}{ds}\,f(\vgamma_s)\Big\|\,ds.
\]
For $\vgamma^\star$, $f(\vgamma^\star_s)=\vx_s$ has constant velocity $\dot{\vx}_s=\vx^b-\vx^a$, hence
\[
\ell_\mathrm{PB}(\vgamma^\star)\;=\;\int_0^1 \|\vx^b-\vx^a\|\,ds \;=\; \|\vx^b-\vx^a\|.
\]

For any other smooth latent curve $\overline{\vgamma}$ from $\vz^a$ to $\vz^b$, using the triangle inequality:
\[
\ell_\mathrm{PB}(\overline{\vgamma})
=\int_0^1 \Big\|\tfrac{d}{ds}\,f(\overline{\vgamma}_s)\Big\|\,ds
\;\ge\;
\Big\|\int_0^1 \tfrac{d}{ds}\,f(\overline{\vgamma}_s)\,ds\Big\|
= \|f(\vz^b)-f(\vz^a)\|
= \|\vx^b-\vx^a\| = \ell_f(\vgamma^\star).
\]
Therefore $\ell_\mathrm{PB}(\overline{\vgamma})\ge \ell_\mathrm{PB}(\vgamma^\star)$. Hence, any pullback geodesic decodes to the straight segment:
\[
f(\vgamma_s^\star)=f(f^{-1}(\vx_s))=(1-s)\,\vx^a+s\,\vx^b.
\]
\end{proof}

In this proposition, we emphasize that any minimizing path in the latent space $\mathcal{Z}$, when measured with the pullback metric, will always decode to a straight line in the data space $\mathcal{X}$. The reason is that the bijective decoder acts only on the ambient coordinates of $\mathcal{X}=\mathbb{R}^d$, regardless of whether the actual data lie on a lower-dimensional submanifold. In the denoising diffusion setting, this situation is unavoidable, since the model enforces $\dim(\mathcal{Z})=\dim(\mathcal{X})$. This stands in contrast with prior works using variational autoencoders \citep{arvanitidis2018latent}, where latent geodesics live in a lower-dimensional space and can decode to curved trajectories in the ambient space. 

Unless the dimension of the latent space is reduced to the intrinsic dimension of the data, the pullback metric carries no meaningful geometric information in the standard denoising diffusion setting, where $\dim(\mathcal{Z})=\dim(\mathcal{X})$.



\section{Proof of \cref{prop:denoising-expfamily}}\label{app:ig-en-proof}
In this section, we prove \cref{prop:denoising-expfamily}. The proof consits of
\begin{enumerate}
    \item Showing that curve energy $\mathcal{E}$ in exponential families simplifies (\cref{app:exp-inf-geo}).
    \item Showing that the family of denoising distributions forms an exponential family (\cref{app:denoising-exp}).
    \item Putting it together (\cref{app:main-prop-concl}).
\end{enumerate}

\subsection{Information geometry in exponential families}\label{app:exp-inf-geo}
We begin by defining an exponential family of distributions.
\begin{definition}[Exponential Family]\label{def:exp-fam}
    A parametric family of probability distributions $\{p(\cdot | \vz)\}$ is called an exponential family if it can be expressed in the form
    \begin{equation}
    p(\vx|\vz) = h(\vx)\,\exp \bigl(\,\veta(\vz)^\top\,T(\vx) - \psi(\vz)\bigr),
    \end{equation}

    with $\vx$ a random variable modelling the data and $\vz$ the parameter of the distribution. In addition, $T(\vx)$ is called a sufficient statistic, $\veta(\vz)$ natural (canonical) parameter, $\psi(\vz)$ the log-partition (cumulant) function and $h(\vx)$ is a base measure independent of $\vz$, and
    \begin{equation}
        \vmu(\vz) = \E \left[ T(\vx) \mid \vz \right]
    \end{equation}
    is the expectation parameter.
\end{definition}
In exponential families, the Riemannian metric tensor takes a specific form, which we show now.
\begin{proposition}[Fisher-Rao metric for an exponential family]
    Let $\{p(\cdot | \vz)\}$ be an exponential family.
    We denote $\veta(\vz)$ the natural parametrisation, $T(\vx)$ the sufficient statistic and $\vmu(\vz)=\E[T(\vx)|\vz]$ the expectation parameters. The Fisher-Rao metric is given by: 
\begin{equation}\label{eq:exp-fam-fr-metric}
    \mathbf{G}_{\mathrm{IG}}({\vz}) = \Biggr(\frac{\partial \veta(\vz)}{\partial \vz}\Biggr)^{\top} \Biggr(\frac{\partial \vmu(\vz)}{\partial \vz}\Biggr).
\end{equation}
\end{proposition}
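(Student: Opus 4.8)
The plan is to compute the score function $\nabla_\vz\log p(\vx\mid\vz)$ explicitly from the exponential-family form, reduce it to a \emph{centered} sufficient statistic, and then recognize both sides of the claimed identity as contractions of the covariance of $T(\vx)$ with the Jacobian $\partial\veta/\partial\vz$. Throughout I would use the outer-product definition of the Fisher–Rao metric from \autoref{eq:fr-metrics}, i.e.\ $\mathbf{G}_{\mathrm{IG}}(\vz)=\E_{\vx\sim p(\cdot\mid\vz)}\big[\nabla_\vz\log p(\vx\mid\vz)\,\nabla_\vz\log p(\vx\mid\vz)^\top\big]$.

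First I would take logarithms in \cref{def:exp-fam}, giving $\log p(\vx\mid\vz)=\log h(\vx)+\veta(\vz)^\top T(\vx)-\psi(\vz)$, and differentiate in $\vz$ to obtain
\begin{equation}
\nabla_\vz\log p(\vx\mid\vz)=\Big(\tfrac{\partial\veta}{\partial\vz}\Big)^\top T(\vx)-\nabla_\vz\psi(\vz).
\end{equation}
The key preliminary fact is the log-partition identity $\nabla_\vz\psi(\vz)=(\partial\veta/\partial\vz)^\top\vmu(\vz)$, which follows by differentiating the normalization constraint $\exp(\psi(\vz))=\int h(\vx)\exp(\veta(\vz)^\top T(\vx))\,d\vx$ under the integral sign. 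Substituting yields the centered score
\begin{equation}
\nabla_\vz\log p(\vx\mid\vz)=\Big(\tfrac{\partial\veta}{\partial\vz}\Big)^\top\big(T(\vx)-\vmu(\vz)\big).
\end{equation}

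Next I would plug this into the metric definition. Since $\partial\veta/\partial\vz$ is deterministic, it factors out of the expectation, leaving the covariance of $T$ in the middle:
\begin{equation}
\mathbf{G}_{\mathrm{IG}}(\vz)=\Big(\tfrac{\partial\veta}{\partial\vz}\Big)^\top\,\E\big[(T-\vmu)(T-\vmu)^\top\mid\vz\big]\,\Big(\tfrac{\partial\veta}{\partial\vz}\Big)=\Big(\tfrac{\partial\veta}{\partial\vz}\Big)^\top\Cov\big[T\mid\vz\big]\,\Big(\tfrac{\partial\veta}{\partial\vz}\Big).
\end{equation}
To bridge to the stated formula I would separately differentiate $\vmu(\vz)=\int T(\vx)\,p(\vx\mid\vz)\,d\vx$, using $\nabla_\vz p=p\,\nabla_\vz\log p$ together with the centered score, to get $\partial\vmu/\partial\vz=\Cov[T\mid\vz]\,(\partial\veta/\partial\vz)$. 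Inserting this into the previous display collapses it to exactly $\mathbf{G}_{\mathrm{IG}}(\vz)=(\partial\veta/\partial\vz)^\top(\partial\vmu/\partial\vz)$.

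I expect the main obstacle to be not any single computation but the bookkeeping: keeping the Jacobian transpose conventions and matrix dimensions consistent so that $(\partial\veta/\partial\vz)^\top$ acts on the correct side, and rigorously justifying the two interchanges of differentiation and integration (for the log-partition identity and for $\partial\vmu/\partial\vz$). These interchanges rely on the standard exponential-family regularity — finiteness of the integral on a neighborhood of $\veta(\vz)$ and a dominating function — which holds for the denoising family constructed in \cref{app:denoising-exp}.
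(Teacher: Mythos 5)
Your proof is correct, but it takes a genuinely different route from the paper's. Both arguments share the same key lemma — the log-partition gradient identity $\nabla_\vz\psi(\vz)=\bigl(\tfrac{\partial\veta}{\partial\vz}\bigr)^{\top}\vmu(\vz)$, which the paper derives from $\E[\nabla_\vz\log p(\vx\mid\vz)\mid\vz]=\mathbf{0}$ and you derive by differentiating the normalization constraint (the same computation in different clothing). After that the paths diverge: the paper invokes the second Bartlett identity to rewrite the Fisher metric as the negative expected Hessian of $\log p$, then expands $\partial^2\log p/\partial z^i\partial z^j$ and $\partial^2\psi/\partial z^i\partial z^j$ and relies on the cancellation of the $\partial^2\eta^k/\partial z^i\partial z^j$ terms. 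You instead stay with the outer-product definition, center the score as $\bigl(\tfrac{\partial\veta}{\partial\vz}\bigr)^{\top}(T(\vx)-\vmu(\vz))$, obtain $\mathbf{G}_{\mathrm{IG}}=\bigl(\tfrac{\partial\veta}{\partial\vz}\bigr)^{\top}\Cov[T\mid\vz]\bigl(\tfrac{\partial\veta}{\partial\vz}\bigr)$, and close the gap with the separate identity $\tfrac{\partial\vmu}{\partial\vz}=\Cov[T\mid\vz]\,\tfrac{\partial\veta}{\partial\vz}$, which you verify correctly by differentiating under the integral. Your version buys three things: it needs no second derivatives of $\veta$, $\vmu$, or $\psi$ (hence only one differentiation-under-the-integral interchange beyond the normalization identity, rather than the repeated ones implicit in the Hessian route); it makes the positive semi-definiteness and symmetry of the right-hand side manifest, which is not obvious from the asymmetric-looking product $\bigl(\tfrac{\partial\veta}{\partial\vz}\bigr)^{\top}\bigl(\tfrac{\partial\vmu}{\partial\vz}\bigr)$; and it exposes the covariance of the sufficient statistic as the central object. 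The paper's version, in exchange, passes through the classical fact that in natural coordinates the Fisher information is the Hessian of the cumulant function, which is a useful connection in its own right. Your closing remark about regularity conditions for the interchanges is apt and is glossed over in the paper as well.
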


\begin{proof}

For $p(\vx|\vz)=h(\vx)\exp\left(\veta(\vz)^{\top}T(\vx) - \psi(\vz)\right)$, we have
\begin{equation}\label{eq:exp-fam-fr-deriv-1}
    \nabla_\vz \log p(\vx|\vz) = \nabla_\vz \sum_k \eta^k(\vz)T^k(\vx) - \nabla_\vz \psi(\vz) = \left(\frac{\partial \veta(\vz)}{\partial \vz}\right)^{\top}T(\vx) - \nabla_\vz \psi(\vz).
\end{equation}
Note that
\begin{equation}\label{eq:exp-score-zero}
    \mathbb{E}\left[\nabla_\vz \log p(\vx|\vz) \mid \vz \right] = \int p(\vx|\vz)  \nabla_\vz \log p(\vx|\vz) d\vx = \int \nabla_\vz p(\vx|\vz) d\vx = \nabla_\vz \int p(\vx|\vz) d\vx = \mathbf{0}.
\end{equation}
Therefore, by taking the expectation of both sides of \autoref{eq:exp-fam-fr-deriv-1}, we get
\begin{equation}\label{eq:psi-grad}
    \nabla_\vz \psi(\vz) = \left(\frac{\partial \veta(\vz)}{\partial \vz}\right)^{\top}\vmu(\vz),
\end{equation}
where $\vmu(\vz) = \mathbb{E}[T(\vx) | \vz]$.
Now we differentiate $j$-th component of both sides of \autoref{eq:exp-score-zero} w.r.t $z^i$, and we get
\begin{equation}
\begin{split}
    0 &= \frac{\partial}{\partial z^i} 0 = \frac{\partial}{\partial z^i} \mathbb{E} \left[ \frac{\partial \log p(\vx | \vz)}{\partial z^j} \;\bigg|\; \vz \right] = \frac{\partial}{\partial z^i} \int p(\vx|\vz)  \frac{\partial \log p(\vx | \vz)}{\partial z^j} d\vx \\
    & = \int \frac{\partial p(\vx | \vz)}{\partial z^i} \frac{\partial \log p(\vx | \vz)}{\partial z^j}d\vx + \int p(\vx|\vz)  \frac{\partial^2 \log p(\vx | \vz)}{\partial z^i \partial z^j} d\vx \\
    & = \mathbb{E}\left[  \frac{\partial \log p(\vx | \vz)}{\partial z^i} \frac{\partial \log p(\vx | \vz)}{\partial z^j} \;\bigg|\; \vz \right] + \mathbb{E}\left[   \frac{\partial^2 \log p(\vx | \vz)}{\partial z^i \partial z^j} \;\bigg|\; \vz \right].
\end{split}
\end{equation}
Therefore
\begin{equation}\label{eq:fr-hessian}
    [\mathbf{G}_{\mathrm{IG}}(\vz)]_{ij} = \mathbb{E}\left[  \frac{\partial \log p(\vx | \vz)}{\partial z^i} \frac{\partial \log p(\vx | \vz)}{\partial z^j} \;\bigg|\; \vz \right] = -\mathbb{E}\left[   \frac{\partial^2 \log p(\vx | \vz)}{\partial z^i \partial z^j} \;\bigg|\; \vz \right].
\end{equation}
Now using \autoref{eq:exp-fam-fr-deriv-1}, we have
\begin{equation}
    \frac{\partial^2 \log p(\vx | \vz)}{\partial z^i \partial z^j} = \frac{\partial }{\partial z^i}\left( \sum_k \frac{\partial \eta^k(\vz)}{\partial z^j} T^k(\vx) - \frac{\partial \psi(\vz)}{\partial z^j}\right) = \sum_k \frac{\partial^2 \eta^k(\vz)}{\partial z^i \partial z^j}T^k(\vx) - \frac{\partial^2 \psi(\vz)}{\partial z^i \partial z^j}.
\end{equation}
Therefore, from \autoref{eq:fr-hessian}:
\begin{equation}\label{eq:exp-fam-fr-deriv-2}
    [\mathbf{G}_{\mathrm{IG}}(\vz)]_{ij} = \frac{\partial^2 \psi(\vz)}{\partial z^i \partial z^j} - \sum_k \frac{\partial^2 \eta^k(\vz)}{\partial z^i \partial z^j}\mu^k(\vz).
\end{equation}
Now using \autoref{eq:psi-grad}, we have
\begin{equation}\label{eq:exp-fam-fr-deriv-3}
    \frac{\partial^2 \psi(\vz)}{\partial z^i \partial z^j} = \frac{\partial }{\partial z^j} \left(\sum_k \frac{\partial \eta^k(\vz)}{\partial z^i}\mu^k(\vz) \right) = \sum_k \frac{\partial^2 \eta^k(\vz)}{\partial z^j\partial z^i}\mu^k(\vz) + \sum_k \frac{\partial \eta^k(\vz)}{\partial z^i} \frac{\partial \mu^k(\vz)}{\partial z^j}.
\end{equation}
Combining (\autoref{eq:exp-fam-fr-deriv-2}) with (\autoref{eq:exp-fam-fr-deriv-3}) yields:
\begin{equation}
    [\mathbf{G}_{\mathrm{IG}}(\vz)]_{ij} = \sum_k \frac{\partial \eta^k(\vz)}{\partial z^j} \frac{\partial \mu^k(\vz)}{\partial z^i} = \left[\left( \frac{\partial \veta(\vz)}{\partial \vz} \right)^{\top} \left( \frac{\partial \vmu(\vz)}{\partial \vz} \right)\right]_{ij}.
\end{equation}
\end{proof}

\begin{corollary}[Energy function for an exponential family]\label{cor:exp-en}
    Let $\vgamma:[0,1]\to \mathcal{Z}$ be a smooth curve in the parameter space $\mathcal{Z}$ of an exponential family. Then
\begin{align}
    \mathcal{E}_{\mathrm{IG}}(\vgamma)&=\frac{1}{2}\int_0^1\left(\tfrac{d}{ds}\veta(\vgamma_s)\right)^{\top} \left( \tfrac{d}{ds}\vmu(\vgamma_s) \right)ds \\
    \ell_{\mathrm{IG}}(\vgamma)&=\int_0^1\sqrt{\left(\tfrac{d}{ds}\veta(\vgamma_s)\right)^{\top} \left( \tfrac{d}{ds}\vmu(\vgamma_s) \right)}ds.
\end{align}
For a curve discretized uniformly into $N$ points, we have $s_n:=\tfrac{n}{N-1}$, $\vgamma_n := \vgamma(s_n) $, and $\vmu_n := \vmu(\vgamma_n)$, $\veta_n:=\veta(\vgamma_n)$, we have
\begin{align}
    \mathcal{E}_{\mathrm{IG}}(\vgamma)&\approx \frac{N-1}{2}\sum_{n=0}^{N-2} \left(\veta_{n+1}-\veta_n\right)^\top\left(\vmu_{n+1}-\vmu_n\right)\\
    \ell_{\mathrm{IG}}(\vgamma)&\approx \sum_{n=0}^{N-2} \sqrt{\left(\veta_{n+1}-\veta_n\right)^\top\left(\vmu_{n+1}-\vmu_n\right)}\label{eq:length-estimate}
\end{align}
\end{corollary}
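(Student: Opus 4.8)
The plan is to reduce the Fisher--Rao energy and length of $\vgamma$ to the claimed inner-product form by substituting the exponential-family metric (\autoref{eq:exp-fam-fr-metric}) into the general definitions $\mathcal{E}_{\mathrm{IG}}(\vgamma)=\tfrac12\int_0^1 \dot\vgamma_s^\top \mathbf{G}_{\mathrm{IG}}(\vgamma_s)\dot\vgamma_s\,ds$ and $\ell_{\mathrm{IG}}(\vgamma)=\int_0^1 \sqrt{\dot\vgamma_s^\top \mathbf{G}_{\mathrm{IG}}(\vgamma_s)\dot\vgamma_s}\,ds$, and then collapsing the resulting quadratic form with the chain rule. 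No auxiliary construction is needed beyond the metric already derived in the preceding proposition.

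The single key step is the following identity. Plugging $\mathbf{G}_{\mathrm{IG}}(\vz)=(\partial\veta/\partial\vz)^\top(\partial\vmu/\partial\vz)$ into the integrand and regrouping the Jacobian factors gives
\[
\dot\vgamma_s^\top \mathbf{G}_{\mathrm{IG}}(\vgamma_s)\dot\vgamma_s
=\Big(\tfrac{\partial\veta}{\partial\vz}(\vgamma_s)\,\dot\vgamma_s\Big)^\top \Big(\tfrac{\partial\vmu}{\partial\vz}(\vgamma_s)\,\dot\vgamma_s\Big),
\]
and by the chain rule $\tfrac{\partial\veta}{\partial\vz}(\vgamma_s)\,\dot\vgamma_s=\tfrac{d}{ds}\veta(\vgamma_s)$, and likewise for $\vmu$. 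Hence the integrand equals $\big(\tfrac{d}{ds}\veta(\vgamma_s)\big)^\top\big(\tfrac{d}{ds}\vmu(\vgamma_s)\big)$ \emph{exactly}, which yields the two continuous formulas at once. I would stress that this regrouping is purely the associativity $\va^\top(\mB^\top\mC)\va=(\mB\va)^\top(\mC\va)$ and does not even require the symmetry of $\mathbf{G}_{\mathrm{IG}}$.

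For the discretized statements I would approximate both time derivatives on the uniform grid $s_n=n/(N-1)$ by forward differences, $\tfrac{d}{ds}\veta(\vgamma_{s_n})\approx (N-1)(\veta_{n+1}-\veta_n)$ and similarly for $\vmu$, and replace the integral by a left Riemann sum with spacing $\Delta s=1/(N-1)$. For the energy the two factors of $(N-1)$ from the finite differences multiply against the single $1/(N-1)$ from $\Delta s$, leaving the net prefactor $\tfrac{N-1}{2}$; for the length the single factor $(N-1)$ pulled out of the square root cancels the $1/(N-1)$ from $\Delta s$, leaving prefactor $1$. Both then coincide with the claimed sums.

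I do not anticipate a genuine obstacle: the continuous part is an exact chain-rule identity, and the discrete part is a routine finite-difference/Riemann-sum approximation. The only points requiring care are the bookkeeping of the powers of $(N-1)$ in the two discretizations, and being explicit that the discrete expressions are first-order approximations (hence the $\approx$), becoming exact only in the $N\to\infty$ limit for a smooth $\vgamma$.
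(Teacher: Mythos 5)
Your proposal is correct and follows essentially the same route as the paper's proof: substitute the exponential-family metric $\mathbf{G}_{\mathrm{IG}}=(\partial\veta/\partial\vz)^\top(\partial\vmu/\partial\vz)$ into the energy and length functionals, collapse the quadratic form via associativity and the chain rule, and then discretize with forward differences on the uniform grid so the powers of $(N-1)$ cancel as you describe. Nothing is missing.
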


\begin{proof}

The energy of $\vgamma$ is given by $ \mathcal{E}_{\mathrm{IG}}(\vgamma) = \frac{1}{2}\int_0^1 \| \dot{\vgamma}_s \|_{\mathbf{G}_{\mathrm{IG}}}^2ds$. We replace the Riemannian metric $\mathbf{G}_{\mathrm{IG}}$ with the previously obtained expression of the Fisher-Rao metric (\autoref{eq:exp-fam-fr-metric}).

Using \cref{eq:exp-fam-fr-metric}), we have
\begin{align*}
    \| \dot{\vgamma}_s \|_{\mathbf{G}_{\mathrm{IG}}}^2&=\dot{\vgamma}_s^{\top} \mathbf{G}_{\mathrm{IG}}({\vgamma_s})\dot{\vgamma}_s=\dot{\vgamma}_s^{\top} \left(\frac{\partial \veta(\vgamma_s)}{\partial \vz}\right)^{\top} \left( \frac{\partial \vmu(\vgamma_s)}{\partial \vz} \right)\dot{\vgamma}_s \\
    & =\left(\frac{\partial \veta(\vgamma_s)}{\partial \vz} \dot{\vgamma}_s\right)^{\top} \left( \frac{\partial \vmu(\vgamma_s)}{\partial \vz} \dot{\vgamma}_s \right) = \left(\tfrac{d}{ds}\veta(\vgamma_s)\right)^{\top} \left( \tfrac{d}{ds}\vmu(\vgamma_s) \right).
\end{align*}
Therefore
\begin{align*}
    \mathcal{E}_{\mathrm{IG}}(\vgamma)&=\frac{1}{2}\int_0^1\| \dot{\vgamma}_s \|_{\mathbf{G}_{\mathrm{IG}}}^2ds=\frac{1}{2}\int_0^1\left(\tfrac{d}{ds}\veta(\vgamma_s)\right)^{\top} \left( \tfrac{d}{ds}\vmu(\vgamma_s) \right)ds \\
    \ell_{\mathrm{IG}}(\vgamma)&=\int_0^1\| \dot{\vgamma}_s \|_{\mathbf{G}_{\mathrm{IG}}}ds=\int_0^1\sqrt{\left(\tfrac{d}{ds}\veta(\vgamma_s)\right)^{\top} \left( \tfrac{d}{ds}\vmu(\vgamma_s) \right)}ds.
\end{align*}
For a discretized curve, the reasoning is similar to the proof of Proposition A.2 in \citet{pmlr-v151-arvanitidis22b}. We have $ds=s_{n+1}-s_n=\tfrac{1}{N-1}$
 and we can approximate $\tfrac{d}{ds}\veta(\vgamma_s))\big\rvert_{s=s_n} \approx \frac{\veta_{n+1}-\veta_n}{ds}$ and $\tfrac{d}{ds}\vmu(\vgamma_s))\big\rvert_{s=s_n} \approx \frac{\vmu_{n+1}-\vmu_n}{ds}$, and
\begin{align*}
    \mathcal{E}_{\mathrm{IG}}(\vgamma)&\approx \frac{1}{2}\sum_{n=0}^{N-2} \left(\frac{\veta_{n+1}-\veta_n}{ds}\right)^\top\left(\frac{\vmu_{n+1}-\vmu_n}{ds}\right)ds = \frac{N-1}{2}\sum_{n=0}^{N-2} \left(\veta_{n+1}-\veta_n\right)^\top\left(\vmu_{n+1}-\vmu_n\right)\\
    \ell_{\mathrm{IG}}(\vgamma)&\approx \sum_{n=0}^{N-2} \sqrt{\left(\frac{\veta_{n+1}-\veta_n}{ds}\right)^\top\left(\frac{\vmu_{n+1}-\vmu_n}{ds}\right)}ds = \sum_{n=0}^{N-2} \sqrt{\left(\veta_{n+1}-\veta_n\right)^\top\left(\vmu_{n+1}-\vmu_n\right)}
\end{align*}

\end{proof}
\subsection{Diffusion denoising distributions are exponential}\label{app:denoising-exp}
A key observation is that the family of denoising distributions $p(\vx_0|\vtx)$ indexed by both space and time $(\vtx, t)$ is exponential, which we prove now.
\begin{proposition}[Exponential family of denoising]\label{prop:denoising-exponential-full}
Let $\vtx$ be a noisy observation corresponding to diffusion time $t$, as introduced in \autoref{eq:forward-kernel}. Then the denoising distribution can be written as
\begin{equation}
    p(\vx_0 \mid \vx_t) = h(\vx_0) \exp\left( \veta(\vx_t, t)^\top T(\vx_0) - \psi(\vx_t, t) \right),
\end{equation}
with $h=q$ the data distribution density, $\psi$ the log-partition function, and
\begin{align} 
    \veta(\vtx, t)&=\left( \frac{\alpha_t}{\sigma_t^2} \vtx, -\frac{\alpha_t^2}{2\sigma_t^2} \right) \quad \text{(natural parameter)} \label{eq:denoising-veta}\\
    T(\vx_0) &= (\vx_0, \|\vx_0\|^2) \qquad \:\:\:\: \text{(sufficient statistic)} \label{eq:denoising-T}\\
    \vmu(\vtx, t) &= \bigg( \underbrace{\E\big[\vx_0 | \vtx\big]}_{\text{`space'} },  \underbrace{\frac{\sigma_t^2}{\alpha_t} \div_\vtx \E[\vx_0 | \vtx] + \big\|\E[\vx_0 | \vtx]\big\|^2}_{\text{`time': } \E\big[ \| \vx_0 \|^2 \,\mid\, \vtx\big]} \bigg),\label{eq:denoising-mu}
\end{align}
which means that the family of denoising distributions $\{ p(\vx_0 \mid \vtx)\}$ indexed by $(\vtx, t)$ is exponential (\cref{def:exp-fam}).
\end{proposition}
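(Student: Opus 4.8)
The plan is to derive the exponential-family form directly from Bayes' rule and then identify the expectation parameter by a Tweedie-style computation. First I would write $p(\vx_0 \mid \vtx) \propto q(\vx_0)\, p(\vtx \mid \vx_0)$, where the proportionality constant depends only on $(\vtx, t)$ through the marginal $p_t(\vtx)$. Substituting the forward kernel \autoref{eq:forward-kernel} and expanding the quadratic,
\[
\|\vtx - \alpha_t \vx_0\|^2 = \|\vtx\|^2 - 2\alpha_t\, \vtx^\top \vx_0 + \alpha_t^2 \|\vx_0\|^2,
\]
the $\|\vtx\|^2$ term and the Gaussian prefactor $(2\pi\sigma_t^2)^{-D/2}$ do not depend on $\vx_0$ at all, so they are absorbed into the log-partition $\psi(\vtx, t)$. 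What remains in the $\vx_0$-dependent exponent is $\frac{\alpha_t}{\sigma_t^2}\vtx^\top \vx_0 - \frac{\alpha_t^2}{2\sigma_t^2}\|\vx_0\|^2$, which is exactly $\veta(\vtx,t)^\top T(\vx_0)$ for the claimed $\veta = \big(\frac{\alpha_t}{\sigma_t^2}\vtx,\, -\frac{\alpha_t^2}{2\sigma_t^2}\big)$ and $T(\vx_0) = (\vx_0, \|\vx_0\|^2)$, with base measure $h = q$. This establishes the exponential-family form of \cref{def:exp-fam}.

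It remains to verify the expectation parameter $\vmu = \E[T(\vx_0) \mid \vtx]$. Its first component $\E[\vx_0 \mid \vtx]$ is immediate by definition. For the second component, \autoref{eq:denoising-mu} claims the nontrivial identity $\E[\|\vx_0\|^2 \mid \vtx] = \frac{\sigma_t^2}{\alpha_t}\div_\vtx \E[\vx_0 \mid \vtx] + \|\E[\vx_0 \mid \vtx]\|^2$, which I would obtain by differentiating the denoiser. Writing $Z(\vtx) = \int q(\vx_0)\exp(\veta^\top T)\,d\vx_0 = e^{\psi}$ so that $\E[\vx_0 \mid \vtx] = Z^{-1}\int \vx_0\, q\, e^{\veta^\top T}\,d\vx_0$, I differentiate under the integral sign using $\partial_{x_t^i}(\veta^\top T) = \frac{\alpha_t}{\sigma_t^2} x_0^i$. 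The quotient rule then yields
\[
\frac{\partial}{\partial \vtx}\E[\vx_0 \mid \vtx] = \frac{\alpha_t}{\sigma_t^2}\Big(\E[\vx_0\vx_0^\top \mid \vtx] - \E[\vx_0\mid \vtx]\,\E[\vx_0 \mid \vtx]^\top\Big) = \frac{\alpha_t}{\sigma_t^2}\Cov(\vx_0 \mid \vtx),
\]
so the Jacobian of the denoiser is a rescaled posterior covariance. Taking the trace gives $\div_\vtx \E[\vx_0 \mid \vtx] = \frac{\alpha_t}{\sigma_t^2}\big(\E[\|\vx_0\|^2 \mid \vtx] - \|\E[\vx_0\mid\vtx]\|^2\big)$, and rearranging produces the claimed formula.

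The routine part is the Gaussian expansion and the identification of $\veta, T, h$; the step requiring care is the second-moment identity, which is the genuinely new content. The main obstacle is justifying the interchange of differentiation and integration (via dominated convergence, using integrability of $q$ against the Gaussian tails) and correctly tracking the quotient-rule term $Z^{-2}(\partial_i Z)$, which contributes the $-\E[\vx_0]\,\E[\vx_0]^\top$ piece and is precisely what converts the raw second moment into a covariance. Once the Jacobian is identified as $\frac{\alpha_t}{\sigma_t^2}\Cov(\vx_0\mid\vtx)$, the trace identity and hence \autoref{eq:denoising-mu} follow immediately.
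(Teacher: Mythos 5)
Your proof is correct. Step~1 (Bayes' rule, expanding the Gaussian quadratic, and absorbing the $\vx_0$-independent terms into $\psi$) is identical to the paper's argument. Where you diverge is the second-moment identity in \autoref{eq:denoising-mu}: the paper imports the known denoising-covariance formula $\mathrm{Cov}[\vx_0\mid\vtx]=\tfrac{\sigma_t^2}{\alpha_t^2}\bigl(\mI+\sigma_t^2\nabla^2_\vtx\log p_t(\vtx)\bigr)$ from \citet{meng2021estimating}, applies the variance decomposition $\E[\|\vx_0\|^2\mid\vtx]=\Tr(\mathrm{Cov}[\vx_0\mid\vtx])+\|\E[\vx_0\mid\vtx]\|^2$, and then uses first-order Tweedie to rewrite the trace as $\tfrac{\sigma_t^2}{\alpha_t}\div_\vtx\E[\vx_0\mid\vtx]$. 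You instead differentiate the posterior mean under the integral sign and obtain $\tfrac{\partial}{\partial\vtx}\E[\vx_0\mid\vtx]=\tfrac{\alpha_t}{\sigma_t^2}\mathrm{Cov}(\vx_0\mid\vtx)$ directly from the exponential-family structure (only the first natural-parameter block depends on $\vtx$, with $\partial_{x_t^i}(\veta^\top T)=\tfrac{\alpha_t}{\sigma_t^2}x_0^i$), then take the trace and rearrange. The two routes are mathematically equivalent --- your Jacobian identity is precisely the cited covariance formula composed with Tweedie --- but yours is self-contained: it proves the needed ``second-order Tweedie'' fact from scratch rather than citing it, at the cost of having to justify the interchange of differentiation and integration, which you correctly flag and which the paper sidesteps by relying on the literature. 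Both arrive at the same expression for $\vmu$, and your version arguably makes clearer \emph{why} the divergence of the denoiser appears: it is the trace of a rescaled posterior covariance.
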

\begin{proof}
\textbf{Step 1: denoising is exponential.} The denoising distribution is given by
\begin{equation*}
    p(\vx_0 | \vx_t) = \frac{p(\vtx | \vx_0)q(\vx_0)}{p_t(\vtx)},
\end{equation*}
where $q$ is the data distribution, $p(\vtx|\vx_0) = \mathcal{N}(\vtx | \alpha_t \vx_0, \sigma_t^2 \mI)$ is the forward density (\autoref{eq:forward-kernel}), and $p_t(\vtx) = \int p(\vtx|\vx_0) q(\vx_0)d\vx_0$ is the marginal distribution at time $t$. Therefore
\begin{equation}
\begin{split}
    p(\vtx|\vx_0) &= \frac{1}{(2\pi\sigma_t^2)^{D/2}}\exp\left( -\frac{\| \vtx - \alpha_t \vx_0 \|^2}{2\sigma_t^2} \right)\\
    & = \frac{1}{(2\pi\sigma_t^2)^{D/2}} \exp \left( -\frac{\|\vtx\|^2}{2\sigma_t^2} + \frac{\alpha_t}{\sigma_t^2}\vtx^{\top}\vx_0 - \frac{\alpha_t^2}{2\sigma_t^2}\|\vx_0\|^2\right) \\
    & = \exp\left(-\frac{D}{2}\log(2\pi \sigma_t^2) -\frac{\|\vtx\|^2}{2\sigma_t^2} \right) \exp\left(- \frac{\alpha_t^2}{2\sigma_t^2}\|\vx_0\|^2 + \frac{\alpha_t}{\sigma_t^2}\vtx^{\top}\vx_0\right).
\end{split}
\end{equation}
By substituting into the denoising density, we get
\begin{equation}
\begin{split}
    p(\vx_0|\vx_t) &= q(\vx_0) \exp\left\{ - \frac{\alpha_t^2}{2\sigma_t^2}\|\vx_0\|^2 + \frac{\alpha_t}{\sigma_t^2}\vtx^{\top}\vx_0 - \left( \log p_t(\vtx) + \frac{D}{2}\log(2\pi \sigma_t^2) +  \frac{\|\vtx\|^2}{2\sigma_t^2}\right)  \right\} \\
    & = h(\vx_0) \exp\left(\veta(\vx_t, t)^{\top}T(\vx_0) - \psi(\vx_t, t)\right),
\end{split}
\end{equation}
where
\begin{align}
    \veta(\vx_t, t) & = \left(\frac{\alpha_t}{\sigma_t^2}\vtx, -\frac{\alpha_t^2}{2\sigma_t^2}\right) &\in \R^{D+1}\\
    T(\vx_0) & = \left( \vx_0, \| \vx_0 \|^2 \right) & \in \R^{D+1} \\
    h(\vx_0) & = q(\vx_0) &\in \R\color{white}{^{D+1}} \\
    \psi(\vx_t, t) & = \log p_t(\vtx) + \frac{D}{2}\log(2\pi \sigma_t^2) +\frac{\|\vtx\|^2}{2\sigma_t^2} &\in \R\color{white}{^{D+1}},
\end{align}
which proves that the denoising distributions form an exponential family.

\textbf{Step 2: deriving the expectation parameter} $\vmu$. The expectation parameter $\vmu$ is given by $\vmu(\vx_t, t) = \E \left[ T(\vx_0) \mid \vtx \right] = \left( \E[\vx_0 \:|\: \vtx], \E[\|\vx_0\|^2 \:|\: \vtx]\right)$. The denoising covariance is known \citep{meng2021estimating}:
\begin{equation}\label{eq:denoising-covariance}
    \mathrm{Cov}[\vx_0 \mid \vtx] = \frac{\sigma_t^2}{\alpha_t^2}\left(\mI + \sigma_t^2 \nabla^2_\vtx \log p_t(\vtx)\right).
\end{equation}
Therefore, from the definition of conditional variance, we can deduce the second denoising moment:
\begin{equation}
\begin{split}
    \E\Big[\|\vx_0||^2 \mid \vtx\Big] &= \E\Big[\Big\|\vx_0 - \E[\vx_0 \mid \vtx]\Big\|^2 \mid \vtx\Big] + \Big\|\E[\vx_0 \mid \vtx]\Big\|^2 \\
    & = \mathrm{Tr}\left(\mathrm{Cov}[\vx_0 \mid \vtx]\right) + \Big\|\E[\vx_0 \mid \vtx]\Big\|^2 \\
    & \stackrel{(\ref{eq:denoising-covariance})}{=} \frac{\sigma_t^2}{\alpha_t^2}\left( D + \sigma_t^2 \Delta \log p_t(\vtx)\right) + \Big\|\E[\vx_0 \mid \vtx]\Big\|^2 \\
    & = \frac{\sigma_2^2}{\alpha_t}\div_{\vtx}\left(\frac{\vtx + \sigma_t^2 \nabla_\vtx \log p_t(\vtx)}{\alpha_t} \right) + \Big\|\E[\vx_0 \mid \vtx]\Big\|^2\\
    & = \frac{\sigma_t^2}{\alpha_t} \div_\vtx \E[\vx_0 \mid \vtx] + \Big\|\E[\vx_0 \mid \vtx]\Big\|^2,
\end{split}
\end{equation}
where we used the fact that \citep{efron2011tweedie}
\begin{equation}
    \E[\vx_0 \mid \vtx] = \frac{\vtx + \sigma_t^2 \nabla_\vtx \log p_t(\vtx)}{\alpha_t}.
\end{equation}
Together, we have
\begin{equation}
    \vmu(\vtx, t) = \bigg( \E\big[\vx_0 \mid \vtx\big],  \frac{\sigma_t^2}{\alpha_t} \div_\vtx \E[\vx_0 | \vtx] + \big\|\E[\vx_0 \mid \vtx]\big\|^2 \bigg)
\end{equation}
\end{proof}
\subsection{Boltzmann denoising distributions}
Note that, if the data distribution is Boltzmann, i.e. $q(\vx_0) \propto \exp(-U(\vx_0))$ for some energy function $U$, we have:
\begin{align*}
    p(\vx_0|\vtx) &\propto q(\vx_0)p(\vtx|\vx_0) \propto \exp(-(U(\vx_0))\exp\left( -\frac{\| \vtx - \alpha_t \vx_0 \|^2}{2\sigma_t^2} \right) \\
    & = \exp \left( -U(\vx_0) - \tfrac{1}{2}\mathrm{SNR}(t) \| \vx_0 - \vtx/\alpha_t\|^2\right).
\end{align*}
This implies that $p(\vx_0|\vtx)$ is also a Boltzmann distribution with $p(\vx_0|\vtx) \propto \exp(-U(\vx_0|\vtx))$ for
\begin{align}\label{eq:denoising-energy-deriv}
    U(\vx_0|\vtx) = U(\vx_0) + \tfrac{1}{2}\mathrm{SNR}(t) \Big\| \vx_0 - \vtx/\alpha_t\Big\|^2.
\end{align}
\subsection{Putting it together: \cref{prop:denoising-expfamily}}\label{app:main-prop-concl}
The claim of \cref{prop:denoising-expfamily} follows from \cref{prop:denoising-exponential-full} and \cref{cor:exp-en}.
\section{Kullback-Leibler divergence in exponential families}\label{app:kl}

For any distribution family, the Fisher-Rao metric is the local approximation of the KL divergence, i.e \citep{pmlr-v151-arvanitidis22b}:
\begin{equation*}
    \kl(p(\cdot | \vz_1) || p(\cdot |\vz_2) )  \approx \frac{1}{2}\left( \vz_1-\vz_2\right)^{\top} \mathbf{G}_{\mathrm{IG}}(\vz_1)\left( \vz_1-\vz_2\right).
\end{equation*}
In the case of exponential families, we have $\mathbf{G}_{\mathrm{IG}}(\vz) = \left(\frac{\partial \veta(\vz)}{\partial \vz}\right)^{\top} \left( \frac{\partial \vmu(\vz)}{\partial \vz} \right)$, and thus we can write
\begin{align*}
    \kl(p(\cdot | \vz_1) || p(\cdot |\vz_2) )  &\approx \frac{1}{2}\left( \vz_1-\vz_2\right)^{\top} \left(\frac{\partial \veta(\vz_1)}{\partial \vz}\right)^{\top} \left( \frac{\partial \vmu(\vz_1)}{\partial \vz} \right)\left( \vz_1-\vz_2\right) \\
    & \approx \frac{1}{2} \left( \veta(\vz_1) - \veta(\vz_2) \right)^{\top}\left( \vmu(\vz_1) - \vmu(\vz_2) \right).
\end{align*}
It turns out that the RHS always corresponds to a notion of distribution divergence (not only when $\vz_1$ and $\vz_2$ are close together), namely the \emph{symmetrized} Kullback-Leibler divergence:
\begin{equation}
    \kls(p || q) \coloneqq \frac{1}{2} \left(\kl(p || q) + \kl(q || p) \right).
\end{equation}
\begin{lemma}[KL in exponential families]\label{lem:kl}
    Let $\mathcal{P}=\{p(\cdot \: | \: \vz) \: | \: \vz \in \mathcal{Z}\}$ be an exponential family with $p(\vx|\vz) = h(\vx)\exp(\veta(\vz)^{\top}T(\vx) - \psi(\vz))$, and $\vmu(\vz) = \E_{\vx \sim p(\vx \: | \: \vz)} [T(\vx)]$. Then
    \begin{equation}
        \kl(\vz_1 || \vz_2) = \left(\veta(\vz_1) - \veta(\vz_2)\right)^{\top} \vmu(\vz_1) - \psi(\vz_1) + \psi(\vz_2),
    \end{equation}
    where we abuse notation and write $ \kl(\vz_1 || \vz_2)$ instead of $ \kl(p(\cdot \: | \: \vz_1) || p(\cdot \: | \: \vz_2))$.
\end{lemma}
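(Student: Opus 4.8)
The plan is to evaluate the divergence directly from its definition as an expectation of the log density ratio, exploiting that the exponential-family form renders this log-ratio an \emph{affine} function of the sufficient statistic $T(\vx)$. Concretely, I would start from
\[
\kl(\vz_1 \| \vz_2) = \E_{\vx \sim p(\cdot \mid \vz_1)}\!\left[\log \frac{p(\vx \mid \vz_1)}{p(\vx \mid \vz_2)}\right],
\]
and substitute $\log p(\vx \mid \vz_i) = \log h(\vx) + \veta(\vz_i)^\top T(\vx) - \psi(\vz_i)$ for $i \in \{1,2\}$.

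The first key step is to observe that the base-measure term $\log h(\vx)$ is common to both log densities and therefore cancels in the difference, leaving
\[
\log \frac{p(\vx \mid \vz_1)}{p(\vx \mid \vz_2)} = \big(\veta(\vz_1) - \veta(\vz_2)\big)^\top T(\vx) - \psi(\vz_1) + \psi(\vz_2).
\]
This expression is linear in $T(\vx)$ plus a term constant in $\vx$, which is precisely what makes the subsequent expectation tractable.

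The second step is to take the expectation over $\vx \sim p(\cdot \mid \vz_1)$. The log-partition terms $\psi(\vz_1)$ and $\psi(\vz_2)$ are deterministic (independent of $\vx$) and hence pass through the expectation unchanged, while linearity of expectation lets me pull it inside the inner product, since $\veta(\vz_1)-\veta(\vz_2)$ is deterministic. The remaining expectation reduces to $\E_{\vx \sim p(\cdot \mid \vz_1)}[T(\vx)] = \vmu(\vz_1)$ by the very definition of the expectation parameter. Collecting terms yields
\[
\kl(\vz_1 \| \vz_2) = \big(\veta(\vz_1) - \veta(\vz_2)\big)^\top \vmu(\vz_1) - \psi(\vz_1) + \psi(\vz_2),
\]
which is the claimed identity.

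There is no substantive obstacle: the result is essentially a one-line consequence of the exponential-family parameterization, and the only points needing (minor) care are the cancellation of $h(\vx)$ and the fact that the log-partition terms are $\vx$-independent constants that therefore survive the expectation intact. To connect this with the symmetrized divergence used in the main text, I would additionally form $\kl(\vz_2 \| \vz_1)$ by swapping the indices and average the two; the $\psi$ terms then cancel and one recovers $\tfrac12\big(\veta(\vz_1)-\veta(\vz_2)\big)^\top\big(\vmu(\vz_1)-\vmu(\vz_2)\big)$, matching the Fisher--Rao local approximation discussed just above the lemma.
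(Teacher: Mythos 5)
Your proposal is correct and follows essentially the same route as the paper's proof: expand the log-density ratio, observe that $\log h(\vx)$ cancels, and apply linearity of expectation so that $\E_{\vx \sim p(\cdot \mid \vz_1)}[T(\vx)] = \vmu(\vz_1)$ while the $\psi$ terms pass through as constants. The additional remark on symmetrization matches the paper's subsequent \cref{lem:sym-kl-exp}.
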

\begin{proof}
    \begin{align*}
        \kl(\vz_1||\vz_2) & = \E_{\vx \sim p(\vx \: | \: \vz_1)} \left[\log p(\vx | \vz_1) - \log p(\vx | \vz_2)  \right]\\
        & = \E_{\vx \sim p(\vx \: | \: \vz_1)} \left[\veta(\vz_1)^{\top}T(\vx) -  \veta(\vz_2)^{\top}T(\vx) - \psi(\vz_1) + \psi(\vz_2)\right] \\
        & = \left(\veta(\vz_1) - \veta(\vz_2)\right)^{\top} \E_{\vx \sim p(\vx \: | \: \vz_1)} [T(\vx)] - \psi(\vz_1) + \psi(\vz_2)\\
        & = \left(\veta(\vz_1) - \veta(\vz_2)\right)^{\top} \vmu(\vz_1) - \psi(\vz_1) + \psi(\vz_2).
    \end{align*}
\end{proof}
\begin{lemma}[Symmetrized KL in exponential families]\label{lem:sym-kl-exp}
    With assumptions of \autoref{lem:kl}, we have
    \begin{equation}
        \kls(\vz_1 || \vz_2) = \frac{1}{2}\left( \veta(\vz_1) - \veta(\vz_2) \right)^{\top}\left(\vmu(\vz_1) - \vmu(\vz_2) \right).
    \end{equation}
\end{lemma}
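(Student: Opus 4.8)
The plan is to reduce this directly to \cref{lem:kl}, which already gives a closed-form expression for the one-sided KL divergence in an exponential family. Since the symmetrized divergence is by definition $\kls(\vz_1 \| \vz_2) = \tfrac{1}{2}\big(\kl(\vz_1\|\vz_2) + \kl(\vz_2\|\vz_1)\big)$, the entire argument is to apply the formula of \cref{lem:kl} to each of the two terms and add them, watching the log-partition contributions cancel.

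Concretely, I would first invoke \cref{lem:kl} twice, once in each direction, obtaining
\begin{align*}
    \kl(\vz_1\|\vz_2) &= \big(\veta(\vz_1) - \veta(\vz_2)\big)^{\top}\vmu(\vz_1) - \psi(\vz_1) + \psi(\vz_2), \\
    \kl(\vz_2\|\vz_1) &= \big(\veta(\vz_2) - \veta(\vz_1)\big)^{\top}\vmu(\vz_2) - \psi(\vz_2) + \psi(\vz_1).
\end{align*}
The key structural observation is that the log-partition terms $-\psi(\vz_1)+\psi(\vz_2)$ and $-\psi(\vz_2)+\psi(\vz_1)$ are exact negatives of one another, so they vanish when the two expressions are summed. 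This is what makes the symmetrized divergence tractable even though each individual KL carries an explicit (and generally hard-to-evaluate) cumulant term.

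Next I would collect the surviving linear-in-$\veta$ terms. Writing $\Delta\veta = \veta(\vz_1)-\veta(\vz_2)$, the first line contributes $\Delta\veta^{\top}\vmu(\vz_1)$ and the second contributes $-\Delta\veta^{\top}\vmu(\vz_2)$, which combine into $\Delta\veta^{\top}\big(\vmu(\vz_1)-\vmu(\vz_2)\big)$. Dividing by the factor of $\tfrac12$ in the definition of $\kls$ then yields exactly
\begin{equation*}
    \kls(\vz_1\|\vz_2) = \tfrac12\big(\veta(\vz_1)-\veta(\vz_2)\big)^{\top}\big(\vmu(\vz_1)-\vmu(\vz_2)\big),
\end{equation*}
as claimed.

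There is no genuine obstacle here: the content of the lemma is entirely carried by \cref{lem:kl}, and the remaining work is a two-line symmetrization in which the cumulant terms cancel by antisymmetry. The only thing to be careful about is bookkeeping the signs when swapping $\vz_1$ and $\vz_2$, so that $\vmu(\vz_1)$ pairs with $+\Delta\veta$ and $\vmu(\vz_2)$ with $-\Delta\veta$; everything else is immediate. This result also retroactively justifies the approximation used in \cref{app:kl}, since it shows the bilinear form $\tfrac12\Delta\veta^{\top}\Delta\vmu$ is not merely a second-order local surrogate but equals the symmetrized KL exactly for all parameter pairs.
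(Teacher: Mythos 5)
Your proof is correct and follows essentially the same route as the paper: apply \cref{lem:kl} in both directions, observe that the log-partition terms cancel by antisymmetry, and collect the remaining terms into the bilinear form. The only cosmetic slip is saying ``dividing by the factor of $\tfrac12$'' where you mean multiplying by $\tfrac12$ per the definition of $\kls$; the algebra itself is right.
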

\begin{proof}
    \begin{align*}
        &2\kls(\vz_1 \: | \: \vz_2) = \kl(\vz_1||\vz_2) + \kl(\vz_2||\vz_1)\\
        & = \left(\veta(\vz_1) - \veta(\vz_2)\right)^{\top} \vmu(\vz_1) - \cancel{\psi(\vz_1)} + \bcancel{\psi(\vz_2)} + \left(\veta(\vz_2) - \veta(\vz_1)\right)^{\top} \vmu(\vz_2) - \bcancel{\psi(\vz_2)} + \cancel{\psi(\vz_1)} \\
        & = \left( \veta(\vz_1) - \veta(\vz_2) \right)^{\top}\left(\vmu(\vz_1) - \vmu(\vz_2) \right).
    \end{align*}
\end{proof}
The formula for KL in \autoref{lem:kl} is not useful in practice, because it requires knowing $\psi(\vz)$, which can be unknown or expensive to evaluate. However, the gradients with respect to both arguments depend only on $\veta$ and $\vmu$.
\begin{lemma}[KL gradients]\label{lem:kl-grads}With assumptions of \autoref{lem:kl}, we have for any $\vz_1, \vz_2$
    \begin{equation}
        \begin{split}
            \nabla_{\vz_1} \kl(\vz_1 || \vz_2) &=\frac{\partial \vmu(\vz_1)}{\partial \vz}^{\top} \left(\veta(\vz_1) - \veta(\vz_2)\right) \\
            \nabla_{\vz_2} \kl(\vz_1 || \vz_2) &=\frac{\partial \veta(\vz_2)}{\partial \vz}^{\top} \left(\vmu(\vz_2) - \vmu(\vz_1)\right)
        \end{split}
    \end{equation}
\end{lemma}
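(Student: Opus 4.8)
The plan is to start from the closed-form expression for the KL divergence already established in \autoref{lem:kl}, namely $\kl(\vz_1 \| \vz_2) = \left(\veta(\vz_1) - \veta(\vz_2)\right)^{\top} \vmu(\vz_1) - \psi(\vz_1) + \psi(\vz_2)$, and simply differentiate it with respect to each argument. The only nontrivial ingredient I need is the log-partition gradient identity $\nabla_\vz \psi(\vz) = \left(\tfrac{\partial \veta(\vz)}{\partial \vz}\right)^{\top}\vmu(\vz)$, proved earlier as \autoref{eq:psi-grad}; this is what lets the final answer be expressed purely through $\veta$ and $\vmu$, with all $\psi$-dependence eliminated.

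For the gradient in $\vz_2$, the computation is almost immediate: only the terms $-\veta(\vz_2)^{\top}\vmu(\vz_1)$ and $+\psi(\vz_2)$ depend on $\vz_2$, so $\nabla_{\vz_2}\kl(\vz_1\|\vz_2) = -\tfrac{\partial \veta(\vz_2)}{\partial \vz}^{\top}\vmu(\vz_1) + \nabla_{\vz_2}\psi(\vz_2)$. Substituting the log-partition identity rewrites the second term as $\tfrac{\partial \veta(\vz_2)}{\partial \vz}^{\top}\vmu(\vz_2)$, and factoring out the common Jacobian $\tfrac{\partial \veta(\vz_2)}{\partial \vz}^{\top}$ yields exactly $\tfrac{\partial \veta(\vz_2)}{\partial \vz}^{\top}\left(\vmu(\vz_2) - \vmu(\vz_1)\right)$, as claimed.

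For the gradient in $\vz_1$, I apply the product rule to the bilinear term $\left(\veta(\vz_1)-\veta(\vz_2)\right)^{\top}\vmu(\vz_1)$, which splits into a contribution where the Jacobian of $\veta$ acts on $\vmu(\vz_1)$, giving $\tfrac{\partial \veta(\vz_1)}{\partial \vz}^{\top}\vmu(\vz_1)$, and a contribution where the Jacobian of $\vmu$ acts on $\left(\veta(\vz_1)-\veta(\vz_2)\right)$, giving $\tfrac{\partial \vmu(\vz_1)}{\partial \vz}^{\top}\left(\veta(\vz_1)-\veta(\vz_2)\right)$. The first of these is precisely $\nabla_{\vz_1}\psi(\vz_1)$ by \autoref{eq:psi-grad}, so it cancels against the $-\psi(\vz_1)$ term in the KL expression. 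What survives is exactly $\tfrac{\partial \vmu(\vz_1)}{\partial \vz}^{\top}\left(\veta(\vz_1)-\veta(\vz_2)\right)$.

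I expect the only delicate point to be the bookkeeping of the product rule and verifying that the $\psi$-derivative term cancels cleanly with the explicit $\psi(\vz_1)$ term; this cancellation is the whole reason the gradients depend only on the natural and expectation parameters and never require evaluating $\psi$ itself. There is no genuine obstacle beyond this, since under the standing smoothness assumptions on the exponential family all the relevant Jacobians exist, and the identity is a finite-dimensional manipulation.
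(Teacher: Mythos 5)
Your proposal is correct and follows essentially the same route as the paper's own proof: differentiate the closed-form KL expression from \autoref{lem:kl} and use the log-partition gradient identity (\autoref{eq:psi-grad}) to cancel the $\psi$-dependent terms, leaving expressions purely in $\veta$ and $\vmu$. The product-rule bookkeeping and the cancellation you flag as the delicate point are exactly the steps the paper carries out.
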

\begin{proof}
The proof is a straightforward calculation using \autoref{lem:kl} and \autoref{eq:psi-grad}. We have
    \begin{align*}
        \nabla_{\vz_1} \kl(\vz_1 || \vz_2) &= \nabla_{\vz_1} \left( \left(\veta(\vz_1) - \veta(\vz_2)\right)^{\top} \vmu(\vz_1) - \psi(\vz_1) + \psi(\vz_2)\right) \\
        & = \frac{\partial \veta(\vz_1)}{\partial \vz}^{\top} \vmu(\vz_1) + \frac{\partial \vmu(\vz_1)}{\partial \vz}^{\top} \left(\veta(\vz_1) - \veta(\vz_2)\right) - \nabla_{\vz}\psi(\vz_1) \\
        & \stackrel{(\ref{eq:psi-grad})}{=} \cancel{\frac{\partial \veta(\vz_1)}{\partial \vz}^{\top} \vmu(\vz_1)} + \frac{\partial \vmu(\vz_1)}{\partial \vz}^{\top} \left(\veta(\vz_1) - \veta(\vz_2)\right) -  \cancel{\frac{\partial \veta(\vz_1)}{\partial \vz}^{\top} \vmu(\vz_1)} \\
        & =  \frac{\partial \vmu(\vz_1)}{\partial \vz}^{\top} \left(\veta(\vz_1) - \veta(\vz_2)\right)
    \end{align*}
and
\begin{align*}
    \nabla_{\vz_2} \kl(\vz_1 || \vz_2) &= \nabla_{\vz_2} \left( \left(\veta(\vz_1) - \veta(\vz_2)\right)^{\top} \vmu(\vz_1) - \psi(\vz_1) + \psi(\vz_2)\right) \\
    & \stackrel{(\ref{eq:psi-grad})}{=} -\frac{\partial \veta(\vz_2)}{\partial \vz}^{\top}\vmu(\vz_1) + \frac{\partial \veta(\vz_2)}{\partial \vz}^{\top} \vmu(\vz_2) = \frac{\partial \veta(\vz_2)}{\partial \vz}^{\top} \left(\vmu(\vz_2) - \vmu(\vz_1)\right)
\end{align*}
\end{proof}
Knowing the gradients allows for estimating the KL divergence along a curve without knowing $\psi$.

\begin{proposition}[KL along a curve] \label{prop:curve-kl}
   Let $\vgamma: [0, 1] \to \mathcal{Z}$ be a smooth denoising curve, and $\vz^* \in \mathcal{Z}$. Then:
   \begin{equation}
   \begin{split}
       \kl(\vgamma_s || \vz^*) &= \kl(\vgamma_0 || \vz^*) + \int_0^s \left( \frac{d}{du} \vmu(\vgamma_u)\right)^{\top} \left( \veta(\vgamma_u) - \veta(\vz^*)\right)du \\ 
       \kl(\vz^* || \vgamma_s) &= \kl(\vz^* || \vgamma_0) + \int_0^s \left( \frac{d}{du} \veta(\vgamma_u)\right)^{\top} \left( \vmu(\vgamma_u) - \vmu(\vz^*)\right)du
   \end{split}
   \end{equation}
\end{proposition}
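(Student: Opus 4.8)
The plan is to treat each identity as an instance of the Fundamental Theorem of Calculus applied to a scalar function of the curve parameter, whose derivative is supplied by the chain rule together with the gradient formulas already established in \cref{lem:kl-grads}. Write $F(s)=\kl(\vgamma_s\,\|\,\vz^*)$ and $G(s)=\kl(\vz^*\,\|\,\vgamma_s)$; assuming $\veta,\vmu,\psi$ and the curve $\vgamma$ are $C^1$, both $F$ and $G$ are differentiable, so $F(s)=F(0)+\int_0^s F'(u)\,du$ and likewise for $G$. It then suffices to identify $F'(u)$ and $G'(u)$ with the stated integrands.

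For the first identity, I would differentiate $F$ by the chain rule, $F'(u)=\big(\nabla_{\vz_1}\kl(\vz_1\,\|\,\vz^*)\big|_{\vz_1=\vgamma_u}\big)^\top\dot{\vgamma}_u$. Substituting the first formula of \cref{lem:kl-grads}, namely $\nabla_{\vz_1}\kl(\vz_1\,\|\,\vz^*)=\tfrac{\partial\vmu(\vz_1)}{\partial\vz}^\top(\veta(\vz_1)-\veta(\vz^*))$, gives $F'(u)=(\veta(\vgamma_u)-\veta(\vz^*))^\top\,\tfrac{\partial\vmu(\vgamma_u)}{\partial\vz}\,\dot{\vgamma}_u$. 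The key simplification is to recognize $\tfrac{\partial\vmu(\vgamma_u)}{\partial\vz}\dot{\vgamma}_u=\tfrac{d}{du}\vmu(\vgamma_u)$ by the chain rule, and to use the symmetry of the inner product, which yields exactly $F'(u)=\big(\tfrac{d}{du}\vmu(\vgamma_u)\big)^\top(\veta(\vgamma_u)-\veta(\vz^*))$. Integrating recovers the first claim.

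The second identity follows by the mirror-image computation, now differentiating in the second KL argument: $G'(u)=\big(\nabla_{\vz_2}\kl(\vz^*\,\|\,\vz_2)\big|_{\vz_2=\vgamma_u}\big)^\top\dot{\vgamma}_u$, and substituting the second formula of \cref{lem:kl-grads}, $\nabla_{\vz_2}\kl(\vz^*\,\|\,\vz_2)=\tfrac{\partial\veta(\vz_2)}{\partial\vz}^\top(\vmu(\vz_2)-\vmu(\vz^*))$. Collapsing $\tfrac{\partial\veta(\vgamma_u)}{\partial\vz}\dot{\vgamma}_u=\tfrac{d}{du}\veta(\vgamma_u)$ again by the chain rule gives $G'(u)=\big(\tfrac{d}{du}\veta(\vgamma_u)\big)^\top(\vmu(\vgamma_u)-\vmu(\vz^*))$, and integration completes the proof. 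The roles of $\veta$ and $\vmu$ are simply swapped between the two cases, reflecting the two gradient formulas.

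Since the gradients are already proved in \cref{lem:kl-grads}, there is no genuinely hard step here; the proof is essentially an application of FTC. The only points demanding care are the bookkeeping in the chain rule — ensuring the Jacobian acts on the velocity $\dot{\vgamma}_u$ to form the total derivative along $\vgamma$, and that the transposes are arranged so that the final scalar matches the orientation of the stated integrands — together with the mild regularity hypotheses ($C^1$ smoothness of the parametrization and of the curve) needed to legitimize the Fundamental Theorem of Calculus.
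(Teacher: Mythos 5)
Your proof is correct and follows exactly the same route as the paper's: the Fundamental Theorem of Calculus applied to $s\mapsto\kl(\vgamma_s\,\|\,\vz^*)$ and $s\mapsto\kl(\vz^*\,\|\,\vgamma_s)$, the chain rule to express the derivative via the gradients of \cref{lem:kl-grads}, and a second application of the chain rule to collapse the Jacobian--velocity products into $\tfrac{d}{du}\vmu(\vgamma_u)$ and $\tfrac{d}{du}\veta(\vgamma_u)$. No gaps; the transpose bookkeeping you flag is handled correctly.
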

\begin{proof}
    \begin{align*}
        \kl&(\vgamma_s || \vz^*) - \kl(\vgamma_0 || \vz^*) = &\\
        & = \int_0^s \frac{d}{du} \left( \kl(\vgamma_u || \vz^*) \right)du  &\color{gray}\text{// Fundamental theorem of calculus}\\
        & = \int_0^s \nabla_{\vz_1}\kl(\vgamma_u || \vz^*)^{\top} \dot{\vgamma}_udu &\color{gray}\text{ // Chain rule} \\
        & = \int_0^s \left( \frac{\partial \vmu(\vgamma_u)}{\partial \vz}  \dot{\vgamma}_u\right)^{\top}\left(\veta(\vgamma_u) - \veta(\vz^*)\right) du &\color{gray}\text{ // \autoref{lem:kl-grads}}\\
        & = \int_0^s \left( \tfrac{d}{du} \vmu(\vgamma_u)\right)^{\top}\left(\veta(\vgamma_u) - \veta(\vz^*)\right) du &\color{gray}\text{ // Chain rule}.\\
    \end{align*}
Using the same reasoning we have
    \begin{align*}
        \kl&( \vz^* || \vgamma_s) - \kl(\vz^* || \vgamma_0) = \int_0^s \frac{d}{du} \left( \kl(\vz^* || \vgamma_u) \right)du  \\
        & = \int_0^s \nabla_{\vz_2}\kl(\vz^* || \vgamma_u)^{\top} \dot{\vgamma}_udu \\
        & = \int_0^s \left( \frac{\partial \veta(\vgamma_u)}{\partial \vz}  \dot{\vgamma}_u\right)^{\top}\left(\vmu(\vgamma_u) - \vmu(\vz^*)\right) du \\
        & = \int_0^s \left( \tfrac{d}{du} \veta(\vgamma_u)\right)^{\top}\left(\vmu(\vgamma_u) - \vmu(\vz^*)\right) du \\
    \end{align*}
\end{proof}

\newpage
\section{Geodesic and DiffED pseudocode}
\begin{algorithm}
\caption{Evaluate Curve}\label{alg:curve-stats}
\begin{algorithmic}[1]
\Require Curve $\vgamma:[0,1]\to\R^{D+1}$, number of points $N_\gamma$
\State $\{s_n\}_{n=0}^{N_\gamma-1} \gets \mathrm{linspace}(0,1,N_\gamma)$\Comment{Uniformly discretized curve}
\For{$n = 0$ to $N_\gamma - 1$}
  \State $\vz_n \gets \vgamma(s_n)$
  \State Decompose $\vz_n = (\vx_t^{(n)}, t_n)$
  \State $\veta_n \gets \veta(\vx_t^{(n)}, t_n)$ using \cref{eq:natural-exp-parameters}
  \State $\vmu_n \gets \vmu(\vx_t^{(n)}, t_n)$ using \cref{eq:mu-est}
\EndFor
\State \Return $\{\veta_n\}, \{\vmu_n\}$
\end{algorithmic}
\end{algorithm}

\begin{algorithm}
\caption{Spacetime Geodesic Estimation}\label{alg:geodesic}
\begin{algorithmic}[1]
\Require $\vx_a, \vx_b \in \R^D$, $N_\gamma > 0$ discretization points
\Require $N_{\text{iter}} > 0$, $t_{\min}>0$, learning rate $\eta > 0$
\State $\vz_a \gets (\vx_a, t_{\min})$, $\vz_b \gets (\vx_b, t_{\min})$\Comment{Embed points into Spacetime}
\State Initialize cubic spline $\vgamma_{\vtheta}$ with $\vgamma_{\vtheta}(0)=\vz_a$, $\vgamma_{\vtheta}(1)=\vz_b$
\For{$k = 0$ to $N_{\text{iter}}-1$}\Comment{Optimization loop}
  \State $\{\veta_n\},\{\vmu_n\} \gets \textsc{EvaluateCurve}(\vgamma_{\vtheta}, N_\gamma)$
  \State $\mathcal{E}(\vgamma_\vtheta) \gets 
         \frac{N_\gamma -1}{2} \sum_n
         (\veta_{n+1}-\veta_n)^\top(\vmu_{n+1}-\vmu_n)$ \Comment{Energy estimate \cref{eq:energy-approx-formula}}
  \State $\vg \gets \nabla_{\vtheta} \mathcal{E}(\vgamma_\vtheta)$
  \State $\vtheta \gets \vtheta - \eta \vg$\Comment{Gradient descent update of curve's parameters}
\EndFor
\State \Return $\vgamma_\vtheta$
\end{algorithmic}
\end{algorithm}

\begin{algorithm}
\caption{Diffusion Edit Distance Estimation}\label{alg:diff-edit}
\begin{algorithmic}[1]
\Require $\vx_a, \vx_b \in \R^D$, $N_\gamma > 0$, $N_{\text{iter}} > 0$
\Require $t_{\min}>0$, learning rate $\eta > 0$
\State $\vgamma_{\vtheta} \gets 
       \textsc{SpacetimeGeodesic}(\vx_a,\vx_b,
       N_\gamma, N_{\text{iter}}, t_{\min}, \eta)$ \Comment{Use \cref{alg:geodesic}}
\State $\{\veta_n\},\{\vmu_n\} \gets 
       \textsc{EvaluateCurve}(\vgamma_{\vtheta}, N_\gamma)$
\State $d_{\mathrm{DE}}(\vx_a,\vx_b) \gets 
       \sum_n \sqrt{
         (\veta_{n+1}-\veta_n)^\top
         (\vmu_{n+1}-\vmu_n)
       }$ \Comment{Length estimate \cref{eq:length-estimate}}
\State \Return $d_{\mathrm{DE}}(\vx_a,\vx_b)$
\end{algorithmic}
\end{algorithm}

\section{Qualitative example of DiffED}
In \cref{fig:diff-edit-qual}, we compare the Diffusion Edit Distance (DiffED) (\cref{sec:diffed}) with LPIPS \citep{zhang2018unreasonable}, SSIM \citep{ssim}, and the Euclidean distance between images. Specifically, for the ImageNet class ``Space bar'', we generate 20 random image pairs, estimate the similarity of each pair with each method, and rank the pairs according to each method, from the most similar to the most dissimilar.
\begin{figure}
    \centering
    \includegraphics[width=1\linewidth]{figures/diff-edit-qualitative.jpg}
    \caption{\textbf{Comparison of DiffED with other image similarity metrics}. Each row corresponds to a different image similarity measure, and images and sorted by their similarity, from most similar (left) to most dissimilar (right). Images shown are 20 random image pairs from class ``Space bar''.}
    \label{fig:diff-edit-qual}
\end{figure}
\section{Experimental details}\label{app:experiment-details}
\subsection{Toy Gaussian mixture}\label{app:gaussian-mixture-details}
For the experiments with a 1D Gaussian mixture (\cref{fig:fig1}, and  \cref{fig:sampling-vs-pf-ode} left), we define the data distribution as $p_0 = \sum_{i=1}^3\pi_i \mathcal{N}(\mu_i, \sigma^2)$ with $\mu_1=-2.5, \mu_2=0.5, \mu_3=2.5$, $\pi_1=0.275, \pi_2=0.45, \pi_3=0.275$, and $\sigma=0.75$. We specify the forward process (\autoref{eq:forward-kernel}) as Variance-Preserving \citep{song2020score}, i.e. satisfying $\alpha_t^2+\sigma_t^2=1$, and assume as log-SNR linear noise schedule, i.e. $\lambda_t = \log\mathrm{SNR}(t) = \lambda_{\mathrm{max}} + (\lambda_{\mathrm{min}} - \lambda_{\mathrm{max}})t$ for $\lambda_{\mathrm{min}}=-10, \lambda_{\mathrm{max}}=10$. Which implies: $\alpha_t^2 = \mathrm{sigmoid}(\lambda_t), \sigma_t^2 = \mathrm{sigmoid}(-\lambda_t) $.

Since $p_0$ is a Gaussian mixture, all marginals $p_t$ are also Gaussian mixtures, and training a diffusion model is unnecessary, as the score function $\nabla_\vx \log_t(\vx)$ is known analytically. In this example, the data is 1D, and the spacetime is 2D.

To generate \cref{fig:fig1} we estimate the geodesic between $\vz_1=(-2.3, 0.35)$, and $\vz_2=(2, 0.4)$ by parametrizing $\vgamma$ with a cubic spline \citep{pmlr-v151-arvanitidis22b} with two nodes, and discretizing it into $N=128$ points and taking $1000$ optimization steps with Adam optimizer and learning rate $\eta=0.1$, which takes a few seconds on an M1 CPU.

To generate \cref{fig:sampling-vs-pf-ode} left, we generate 3 PF-ODE sampling trajectories starting from $x=1, 0, -1$ using an Euler solver with 512 solver steps. We solve only until $t=t_{\mathrm{min}}=0.1$ (as opposed to $t=0$), because for $t\approx 0$, the denoising distributions $p(\vx_0|\vtx)$ become closer to Dirac delta distributions $\delta_{\vtx}$, which makes the energies very large. For each sampling trajectory, we take the endpoints $(x_1, 1), (x_{t_{\mathrm{min}}}, t_{\mathrm{min}})$ and estimate the geodesic between them using \cref{prop:denoising-expfamily} with a cubic spline with 10 nodes, discretizing it into 512 points, and taking 2000 gradient steps of AdamW optimizer with learning rate $\eta=0.01$. This takes roughly 10 seconds on an M1 CPU.

\subsection{Image data}\label{app:image-experiment-details}
For all experiments on image data, we use the pretrained EDM2 model trained on ImageNet512 \citep{karras2024analyzing} (specifically, the \texttt{edm2-img512-xxl-fid} checkpoint), which is a Variance-Exploding model, i.e. $\alpha_t=1$, and using the noise schedule $\sigma_t=t$. It is a latent diffusion model, using a fixed StabilityVAE \citep{rombach2022high} as the encoder/decoder.

\paragraph{Image interpolations.} To interpolate between to images, we encode them with StabilityVAE to obtain two latent codes $\vx_0^1, \vx_0^2$, and encode them both with PF-ODE (\autoref{eq:pf-ode}) from $t=0$ to $t=t_{\mathrm{min}}=0.368$, corresponding to $\log\mathrm{SNR}(t_{\mathrm{min}})=2$. This is to avoid very high values of energy for $t \approx 0$. We then optimize the geodesic between $(\vx_{t_{\mathrm{min}}}^1, t_{\mathrm{min}})$ and $(\vx^2_{t_{\mathrm{min}}}, t_{\mathrm{min}})$ by parametrizing it with a cubic spline with 8 nodes, and minimizing the energy defined in \cref{prop:denoising-expfamily} using AdamW optimizer with learning rate $\eta=0.1$. The curve is discretized into 16 points, and optimized for 200 gradient steps, which takes roughly 6 minutes on an A100 NVIDIA GPU per interpolation image pair.

Note that in our experiments, we used the largest release model \texttt{edm2-img512-xxl-fid}. The image interpolation time can be reduced to roughly a minute by considering the smallest model version \texttt{edm2-img512-xs-fid}.


\paragraph{PF-ODE sampling trajectories.} To generate PF-ODE sampling trajectories, we use the 2nd order Heun solver \citep{karras2022elucidating} with 64 steps, and solve from $t=80$ to $t_{\mathrm{min}}=0.135$ corresponding to $\log\mathrm{SNR}(t_{\mathrm{min}})=4$. This is to avoid instabilities for small $t$. We parametrize the geodesic directly with the entire sampling trajectory $\vgamma_t = (\vx_t, t)$ for $t=T, \dots, t_{\mathrm{min}}$, where the $t$ schedule corresponds to EDM2 model's sampling schedule.

We then fix the endpoints of the trajectory, and optimize the intermediate points using AdamW optimizer with learning rate $\eta=0.0001$ (larger learning rates lead to \texttt{NaN} values) and take 600 optimization steps. This procedure took roughly 2 hours on an A100 NVIDIA GPU per a single sampling trajectory.

To visualize intermediate noisy images at diffusion time $t$, we rescale them with $\frac{\sigma_{\mathrm{data}}}{\sqrt{\sigma_{\mathrm{data}}^2 + \sigma_t^2}}$ before decoding with the VAE deocoder, to avoid unrealistic color values, where we set $\sigma_{\mathrm{data}}=0.5$ as in \citet{karras2022elucidating}.

\subsection{Molecular data}\label{app:molecular-details}
\paragraph{Approximating the base energy function with a neural network.}
We follow \citet{holdijk2023stochastic} and represent the energy function of Alanine Dipeptide in the space of two dihedral angles $\phi, \psi \in [-\pi, \pi)$. We use the code provided by the authors at \url{github.com/LarsHoldijk/SOCTransitionPaths}, which estimates the energy $U(\phi, \psi)$. However, even though the values of the energy $U$ looked reasonably, we found that the provided implementation of $\frac{\partial U}{\partial \phi}$, and $\frac{\partial U}{\partial \psi}$ yielded unstable results due to discontinuities.

Instead, we trained an auxiliary feedforward neural network $U_\theta$ to approximate $U$. We parametrized with two hidden layers of size 64 with SiLU activation functions, and trained it on a uniformly discretized grid $[-\pi, \pi]\times[-\pi, \pi]$ into 16384 points. We trained the model with mean squared error for 8192 steps using Adam optimizer with a learning rate $\eta=0.001$ until the model converged to an average loss of $\approx 1.5$. This took approximately two and a half minutes on an M1 CPU. In the subsequent experiments, we estimate $\nabla_\vx U(\vx)$ with automatic differentiation on the trained auxiliary model.
\paragraph{Generating samples from the energy landscape.} To generate samples from the data distribution $p_0(\vx_0) \propto \exp(-U(\vx_0))$, we initialize the samples uniformly on the $[-\pi, \pi]\times[-\pi, \pi]$ grid, and use Langevin dynamics
\begin{equation}
    d\vx = -\nabla_\vx U(\vx)dt + \sqrt{2}dW_t
\end{equation}
with the Euler-Maruyama solver for $dt=0.001$ and $N=1000$ steps.
\paragraph{Training a diffusion model on the energy landscape.} To estimate the spacetime geodesics, we need a denoiser network approximating the denoising mean $\hat{\vx}_0(\vtx, t) \approx \E[\vx_0|\vtx]$. We parametrize the denoiser network with
\begin{verbatim}
from ddpm import MLP
model = MLP(
    hidden_size=128,
    hidden_layers=3,
    emb_size=128,
    time_emb="sinusoidal",
    input_emb="sinusoidal"
)
\end{verbatim}
using the TinyDiffusion implementation \url{github.com/tanelp/tiny-diffusion}. We trained the model using the weighted denoising loss: $w(\lambda_t)\|\hat{\vx}_0(\vtx, t)-\vx_0\|^2$ with a weight function $w(\lambda_t)=\sqrt{\mathrm{sigmoid}(\lambda_t + 2)}$ and an adaptive noise schedule \citep{kingma2023understanding}. We train the model for 4000 steps using the AdamW optimizer with learning rate $\eta=0.001$, which took roughly 1 minute on an M1 CPU.

\paragraph{Spacetime geodesics.} With a trained denoiser $\hat{\vx}_0(\vtx, t)$, we can estimate the expectation parameter $\vmu$ (\autoref{eq:mu-est}) and thus curves energies in the spacetime geometry (\cref{prop:denoising-expfamily}).

In \cref{sec:tps}, we want to interpolate between two low-energy states: $\vx_0^1=(-2.55, 2.7)$ and $\vx_0^2=(0.95, -0.4)$. To avoid instabilities for $t\approx 0$, we represent them on the spacetime manifold as $\vz_1=(-2.55, 2.7, t_{\mathrm{min}})$, and $\vz_2=(0.95, -0.4, t_{\mathrm{min}})$, where $\log\mathrm{SNR}(t_{\mathrm{min}})=4$. We then approximate the geodesic between them by parametrizing $\vgamma$ as a cubic spline with 10 nodes and fixed endpoints $\vgamma_0=\vz_1$, and $\vgamma_1=\vz_2$ and discretize it into 128 points. We then optimize it by minimizing \cref{prop:denoising-expfamily} with the Adam optimizer with learning rate $\eta=0.1$ and take 10000 optimization steps, which takes roughly 4 minutes on an M1 CPU.

\paragraph{Annealed Langevin dynamics.}
To generate transition paths, we use Annealed Langevin dynamics (\cref{alg:tps}) with the geodesic discretized into $N=128$ points, $K=128$ Langevin steps for each point on the geodesic $\vgamma$, and use $dt=0.0004$, i.e., requiring 16385 evaluations of the gradient of the auxiliary energy function per path. Generating 1000 independent paths in parallel takes 32 seconds on an M1 CPU and requires a total of 16,385,000 energy function evaluations.

\paragraph{Constrained transition paths.}
Constrained transition paths were also parametrized with cubic splines with 10 nodes, but discretized into 1024 points.

For the \textbf{low-variance} transition paths, we chose the threshold $\rho=3$, and $\lambda=0$ for the first 1200 optimization steps, and $\lambda$ linearly increasing from $0$ to $100$ for the last 3800 optimization steps, for the total of 5000 optimization steps with the Adam optimizer with a learning $\eta=0.01$. This took just under 6 minutes on an M1 CPU.

For the \textbf{region-avoiding} transition paths, we encode the restricted region with $\vz^*=(-0.8, -0.1, t^*)$ with $\log \mathrm{SNR}(t^*)=4$, and combine two penalty functions: $h_1$ is the low-variance penalty described above, but with $\rho_1=3.75$ threshold, and $h_2$ is the KL penalty with $\rho_2=-4350$ threshold. We define $\lambda_1$ as in the low-variance transitions, and fix $\lambda_2=1$. The optimization was performed with Adam optimizer, learning rate $\eta=0.1$, and ran for 4000 steps for a runtime of just under 5 minutes on an M1 CPU.

The reason we include the low-variance penalty in the region-avoiding experiment is because $\kl(p(\cdot|\vz^*) \: || \: p(\cdot | \vgamma_s)) $ can trivially be increased by simply increasing entropy of $p(\cdot |\vgamma_s)$ which would not result in avoiding the region defined by $p(\cdot|\vz^*)$.

\section{Note on transition path sampling baselines}\label{app:baseline-issues}
For transition path experiments performed in \cref{sec:tps}, we considered \citet{holdijk2023stochastic,du2024doobs,raja2025actionminimization} as baselines. However, we encountered reproducibility issues. Specifically
\begin{itemize}
    \item \citet{holdijk2023stochastic} released the implementation: \url{github.com/LarsHoldijk/SOCTransitionPaths}. However, it does not appear to be supported. Several issues in the repository highlight failures to reproduce results, which have remained unresolved for more than a year.
    \item \citet{raja2025actionminimization} released the implementation: \url{github.com/ASK-Berkeley/OM-TPS}. However, it does not contain the code for the alanine dipeptide experiments, and the authors did not respond to a request to release it.
    \item \citet{du2024doobs} released the implementation: \url{github.com/plainerman/Variational-Doob} that we were able to use. However, we obtained results significantly worse than those reported in the original publication. We have contacted the authors, who acknowledged our question but did not provide guidance on how to resolve the issue.
\end{itemize}

For Doob's Lagrangian \citep{du2024doobs}, we experimented with: different numbers of epochs, different numbers of Gaussians, first vs second order ODE, MLP vs spline, and internal vs external coordinates. We reported the results of the configuration that was the best. Many configurations either diverged completely (returned \texttt{NaN} values) or collapsed to completely straight transition paths, oblivious to the underlying energy landscape. These issues persisted even after switching to double precision (as advised in the official code repository).

\section{Expectation parameter estimation code}\label{app:mu-est-code}
\begin{lstlisting}[language=Python, caption=JAX Implementation of $\vmu$ estimation, label=lst:jax_mu_est]
import jax
import jax.random as jr
import jax.numpy as jnp

def f(x, t, key): # Implemenation of the expected denoising
    pass

def sigma_and_alpha(t): # Depends on the choice of SDE and noise schedule
    pass

def mu(x, t, key):
    model_key, eps_key = jr.split(key, 2)
    eps = jr.rademacher(eps_key, (x.size,), dtype=jnp.float32)
    def pred_fn(x_):
        return f(x_, t, key=model_key)
    f_pred, f_grad = jax.jvp(pred_f, (x,), (eps,))
    div = jnp.sum(f_grad * eps)
    sigma, alpha = sigma_and_alpha(t)
    return sigma**2/alpha * div + jnp.sum(f_pred ** 2), f_pred
\end{lstlisting}




\section{Licences}\label{app:licences}
\begin{itemize}
    \item EDM2 model \citep{karras2024analyzing}: Creative Commons BY-NC-SA 4.0 license
    \item ImageNet dataset \citep{imagenet}: Custom non-commercial license
    \item SDVAE model \citep{rombach2022high}: CreativeML Open RAIL++-M license
    \item OpenM++ \citep{openmp08}: MIT License
\end{itemize}

\end{document}